\documentclass[accepted,specialissue]{melba}

\usepackage{amsmath,amsfonts}

\usepackage{graphicx}
\usepackage{float}
\usepackage{booktabs}
\usepackage{amssymb}
\usepackage{subcaption}
\usepackage{bbm}

\newcommand{\TVx}{\mathrm{TV}(\mathbf{x})}

\newcommand{\btheta}{{\boldsymbol{\theta}}}

\newcommand{\by}{\mathbf{y}}
\newcommand{\bx}{\mathbf{x}}

\newcommand{\bu}{\mathbf{u}}

\newcommand{\bbarx}{\bar{\mathbf{x}}}

\newcommand{\TV}{\mathrm{TV}}
\newcommand{\bD}{{\boldsymbol{D}}}
\newcommand{\bV}{\boldsymbol{V}}
\newcommand{\bv}{\boldsymbol{v}}
\newcommand{\bk}{\boldsymbol{k}}
\newcommand{\tk}{\tilde{\boldsymbol{k}}}

\melbaid{2026:015}  
\doi{https://doi.org/10.59275/j.melba.2026-g41g}
\melbaauthors{Lin, Berggren and Löfstedt}  
\email{disi.lin@umu.se}
\volume{2026}
\firstpageno{297}  
\melbayear{2026}  
\datesubmitted{2026-01}  
\datepublished{2026-06}  

\melbaspecialissue{Uncertainty for Safe Utilization of Machine Learning in Medical Imaging (UNSURE) 2025}
\melbaspecialissueeditors{Mobarak Hoque, Raghav Mehta, Cheng Ouyang, Chen Qin, Marianne Rakic, Sandy Wells}

\ShortHeadings{MELBA Journal Sample Article}{Lin, Berggren and Löfstedt}

\title{Generalized TV--$\ell_p$ Structured Priors for \mbox{Bayesian $T_1$ Mapping}}

\author{
	\firstname Disi \surname Lin\aff{1}\orcid{0009-0001-9691-6042},
	\firstname Martin \surname Berggren\aff{1}\orcid{0000-0003-0473-3263},
    \firstname Tommy \surname Löfstedt\aff{1}\orcid{0000-0001-7119-7646}
}
\affiliations{
	\num 1 \addr Department of Computing Science, Umeå University, Sweden
}

\abstract{
	We propose an extended family of structured spatial priors that incorporates the total variation (TV) function with $\ell_p$ norms. 
    The prior is proven to be proper and incorporated into a Bayesian regression framework to enable uncertainty quantification in $T_1$ mapping, with posterior inference performed using the No-U-Turn Sampler (NUTS). This TV--$\ell_p$ construction is proven to constitute a well-defined family of prior distributions, and it naturally enforces spatial consistency and smooth variations in the estimated parameter maps.
    The method was evaluated in comparison to maximum-likelihood estimation and several Bayesian alternative priors based on the uniform, Gamma, and bounded TV priors. The evaluation includes experiments on synthetic brain and cardiac $T_1$ mapping datasets, as well as a real in-vivo breast $T_1$ mapping dataset.
    The results show that the TV--$\ell_p$ prior yields more concentrated posterior densities, indicating reduced uncertainty. It also consistently achieves lower variance and smaller (negative) bias, leading to more reliable estimates.
    Overall, embedding a TV-based structured penalty along with $\ell_p$ norms in a prior in a Bayesian model improves spatial coherence in $T_1$ maps and enhances uncertainty quantification, offering a robust approach for $T_1$ mapping with uncertainties.
	Our code is available at~\url{https://github.com/Disi2022/Proper-TV-Structured-Priors}.}

\keywords{Bayesian Inference, $T_1$ Mapping, Uncertainty Quantification, Structured Prior, Total Variation, $\ell_p$ Norms}

\begin{document}

\twocolumn[\maketitle]

\section{Introduction}
\enluminure{M}{agnetic} Resonance Imaging (MRI) is a widely used medical imaging modality to visualize a subject's anatomical structures and physiological processes. For example, cardiac MRI~\citep{rajiah2023cardiac} provides detailed images of the cardiovascular system while breast MRI is used to examine internal breast structures \citep{wekking2023breast}. Functional MRI (fMRI) can be used to measure brain activity by detecting changes in cerebral blood flow and oxygenation~\citep{deyoe1994functional}. 
MRI techniques play crucial roles in disease diagnosis and treatment.
Compared to other imaging modalities, such as ultrasound and computed tomography (CT), MRI provides superior soft tissue contrast and eliminates the risks of exposure to ionizing radiation.

Despite its advantages, conventional MRI primarily provides qualitative images based on contrast differences, which limits its sensitivity to subtle or early pathological changes~\citep{lavrova2021exploratory}. 
Additionally, MR image intensity values are not standardized and can vary substantially depending on factors such as manufacturer, sequence type, and acquisition parameters~\citep{carre2020standardization}.
Quantitative MRI (qMRI) is a variation of MRI that produces measurements in physical units rather than relying on contrast weighting~\citep{granziera2021quantitative}.
Techniques using qMRI have been developed to quantify biophysical parameters such as relaxation times ($T_1$, $T_2$, and $T_2^*$), proton density, diffusion, \textit{etc.}~\citep{jara2022primary}.
Obtaining parametric maps typically involves two steps:
1) Transform raw k-space data to the image domain, resulting in contrast-weighted images; and
2) fit the parametrically linked contrast-weighted images to a biophysical or MRI signal model, resulting in quantitative parametric maps~\citep{shafieizargar2023systematic}. 
Overall, qMRI provides a more objective and reproducible alternative to conventional MRI~\citep{hellstrom2025noise}.

In particular, $T_1$ mapping is a qMRI technique that measures the longitudinal or spin-lattice relaxation time. $T_1$ mapping has, \textit{e.g.}, been used to uncover pathological processes in the brain~\citep{grafe2021quantitative} and myocardial tissue characterization~\citep{becker2017multi}.
Multiple $T_1$ mapping pulse sequences have been proposed~\citep{tirkes2019evaluation}, with inversion recovery (IR) widely regarded as the gold standard due to its accuracy. IR records the recovery of longitudinal magnetization following an inversion pulse~\citep{pykett1983measurement}.
However, it requires a long acquisition time, which has prompted the development of various accelerated alternatives. One such alternative is saturation recovery, which replaces the inversion pulse with a saturation pulse.
Another prominent strategy is the Look--Locker (LL) method, which samples the recovery curve after inversion~\citep{look1970time}.
Building upon the LL framework, several refined variants have been proposed to improve efficiency and accuracy, including the Modified Look--Locker IR (MOLLI) sequence~\citep{messroghli2004modified} and the Modulated Repetition Time Look--Locker (MORTLL) method~\citep{gai2009modulated}.
Differing in magnetization preparation, the Variable Flip Angle (VFA) technique~\citep{christensen1974optimal} estimates $T_1$ from steady-state signals across multiple flip angles instead of recovery curves following inversion or saturation.

However, these $T_1$ mapping techniques are typically limited by long repetition times and introduce systematic errors in the $T_1$ estimation~\citep{preibisch2009influence}.
Undersampling the k-space, to sample below the Nyquist rate, has become the mainstream approach for accelerating MRI acquisition.
However, undersamping results in underdetermined equation systems, leading to artifacts when using linear reconstruction methods and lower signal-to-noise ratio in acquired data, which leads to poor reconstructed image quality~\citep{virtue2017empirical}.
Regularization, \textit{e.g.}, based on low-rank constraint and/or total variation (TV), has been used to solve such ill-posed inverse problems.
\citet{zhang2015accelerating} proposed a local low-rank constraint for VFA $T_1$ mapping, as well as multi-echo spin-echo $T_2$ mapping.
\citet{pandey2021joint} proposed a joint TV-based reconstruction for accelerated qMRI reconstruction.
\citet{le2022accelerated} used recurrent neural networks with a cyclic, model-based loss for cardiac MOLLI $T_1$ mapping.
\citet{wang2023free} proposed a model-based reconstruction method using a joint $\ell_1$--Wavelet spatial regularization and temporal TV regularization for free-breathing myocardial $T_1$ mapping.

The methods mentioned above are all frequentist approaches that maximize a likelihood (usually assuming Gaussian errors, corresponding to least-squares problems).
However, using (regularized) maximum likelihood estimation on each pixel independently typically leads to noisy reconstructions and does not account for uncertainty in the estimation. 
Bayesian inference has proven effective in modeling uncertainties by incorporating prior knowledge and observed data within a probabilistic framework~\citep{gelman1995bayesian}.
Bayesian models provide full posterior distributions over model parameters and predictions, allowing not only point estimates but also quantification of credibility and uncertainty.
For instance, \citet{beirinckx2022model} used a Bayesian approach with a TV prior for $T_1$ and $T_2$ mapping with super-resolution.
\citet{lofstedt2020bayesian} developed a Bayesian method based on a bounded TV prior for $T_1$ mapping that both reduced and estimated uncertainty in parametric maps.
\citet{huang2025accelerated} proposed a Bayesian formulation that models the wavelet coefficients of VFA-multi-echo images as Laplace distributed to achieve model-based parametric map reconstructions.

In Bayesian inference, the prior distribution encodes our beliefs about the unknown parameters before observing any data.
Common priors include noninformative priors, such as the uniform or Jeffreys priors, weak priors, and informative priors. 
Selecting an appropriate prior distribution is crucial since an inappropriate informative prior can disproportionately affect the resulting posterior inferences~\citep{jones2022quantifying}.
The Laplace prior is a commonly used prior that promotes sparsity but uses a single scale parameter to control both mass near zero and tail heaviness, often reducing flexibility and causing over-shrinkage of large coefficients.
To address this issue, alternative priors have been introduced that decouple sparsity enforcement from tail behavior.
Examples include the Horseshoe~\citep{carvalho2010horseshoe} or spike-and-slab priors~\citep{ishwaran2005spike}, both of which allow strong shrinkage of small coefficients while preserving large coefficients.
\citet{armagan2013generalized} proposed a generalized double Pareto prior which retains the sharp peak at zero characteristic of the Laplace distribution while exhibiting heavy, Student's t-like tails.
\citet{tang2019bayesian} adapted this prior for rank penalization under the Bayesian framework enabling adaptive learning of the rank.

While these priors effectively encode sparsity, they do not explicitly capture spatial structure. In images, neighboring pixels/voxels are typically strongly correlated, reflecting the inherent spatial continuity of physical tissues or structures. Incorporating such spatially structured information into the prior can therefore substantially improve inference quality.
TV regularization encourages smooth solutions by promoting sparsity in the spatial gradients of the coefficients, effectively favoring homogeneous parameter maps and sharp edges, and suppressing small, noisy fluctuations~\citep{michel2011total,hadj2018continuation}.
Incorporating TV directly into the Bayesian framework poses challenges: the standard TV penalty does not trivially correspond to a proper prior distribution, as it lacks a normalizable probability density over the parameter space (see Lemma \ref{lem:tv}).
To address this, \citet{lofstedt2020bayesian} used a bounded version of the TV prior, restricting the set of acceptable parameter values.
While that approach results in a proper prior, it still suffers from key limitations. 
It requires the exclusion of certain parameter values from the prior's support, limiting the flexibility and interpretability.
To overcome these issues, our previous work~\citep{disi2025} combined the anisotropic TV function and the $\ell_1$ norm, forming a proper prior, \textit{i.e.}, one that defines a valid probability distribution---without excluding any parameter values.
And experiments on synthetic brain $T_1$ mapping data showed that the method provides smoother results and narrower credible intervals.

In this paper, we rigorously show that a prior constructed solely from the TV functional cannot be normalized and therefore fails to define a proper probability distribution over the parameter space. We subsequently generalize the prior construction from previous work~\citep{disi2025} to accommodate both anisotropic and isotropic TV combined with any $\ell_p$ norm with $p \geq 1$, and prove that the resulting prior is proper.
We applied this proposed TV--$\ell_p$ prior to Bayesian $T_1$ mapping.

The proposed method was evaluated by comparing it with maximum-likelihood estimation and several Bayesian alternatives based on uniform, Gamma, and bounded TV priors. The evaluation includes experiments on synthetic brain and cardiac $T_1$ mapping datasets (acquired with VFA and MOLLI, respectively), as well as a real in-vivo breast $T_1$ mapping dataset (acquired with VFA).
The results show that across all datasets, the TV--$\ell_p$ prior consistently produces more concentrated posterior densities, indicating a substantial reduction in uncertainty compared to alternative methods. Moreover, it achieves lower variance and a smaller negative bias in the estimates, resulting in more robust and reliable $T_1$ estimates. These results highlight the ability of the TV--$\ell_p$ prior to preserve spatial structure while providing uncertainty quantification, making it particularly effective for both synthetic simulations and real in-vivo measurements.

\section{Background}

\subsection{Signal Models}

$T_1$ mapping is performed by fitting the measured data to a corresponding signal model. The proposed prior is evaluated using data acquired with the VFA and MOLLI sequences. The following subsection describes the signal models employed for these two acquisition schemes.

\subsubsection{VFA $T_1$ Mapping}

VFA $T_1$ mapping quantifies $T_1$ values from repeated scans at different excitation flip angles.
Specifically, VFA acquires two or more gradient-echo (GRE) data sets with different excitation angles, $\alpha_i$ for $i= 1, 2, \ldots, I$~\citep{baudrexel2018t1}.
The signal intensity, $S$, is a function of the longitudinal relaxation time, $T_1$, repetition time, TR, flip angle, $\alpha_i$, and the equilibrium longitudinal magnetization, $M_0$, which is determined by the proton density and other factors~\citep{deoni2003rapid}. 
The signal intensity is defined as
\begin{equation}
    \label{eq: vfa}
    S_{\btheta}(\alpha_i)
        = M_0 \frac{1-e^{-\frac{\text{TR}}{T_1}}}{1-\cos \left(\alpha_i\right) e^{-\frac{\text{TR}}{T_1}}} \sin \alpha_i,
\end{equation}
where $\btheta = (T_1, M_0)$ denotes the parameters to be estimated, which vary for tissues with different properties. In practice, however, residual unspoiled transverse magnetization may yield discrepancies between measured signal intensities and the theoretical values modeled in Equation~\eqref{eq: vfa}~\citep{baudrexel2018t1}. To account for this variability, the measurement at the $n$-th voxel is modeled as
\begin{equation}
    \label{eq:noise}
    y_n^{(i)} = S_{\btheta_n}^{(i)} + \epsilon_n^{(i)},
\end{equation}
where the additive noise terms, $\epsilon_n^{(i)}$, are assumed to be independent and follow a zero-mean Normal distribution with variance $\sigma^2$, \textit{i.e.}, $\epsilon_n^{(i)} \sim \mathcal{N}(0, \sigma^2)$. Here, $\btheta_n$ denotes the pair $(T_1, M_0)$ at voxel $n$, and the index $i$ identifies the acquisition associated with the $i$-th flip angle.

\subsubsection{MOLLI $T_1$ Mapping}

In contrast to VFA, the MOLLI method estimates $T_1$ using IR data. The MOLLI signal is modeled as
\begin{equation}\label{eq:MOLLI1}
    S_{\btheta}(t_i) = A - B \exp \left(- t_i / T_1^*\right),
\end{equation}
where $\btheta = (T_1^*, A, B)$ denotes the parameters to be estimated, which vary for tissues with different properties. 
As with VFA, the actual measurements are affected by systematic error~\citep{messroghli2004modified}. Therefore, the observed signal at voxel $n$ and inversion time $t_i$ is modeled following the same formulation as in Equation~\eqref{eq:noise},
The true longitudinal relaxation time, $T_1$, is derived from the estimated parameters as
\begin{equation}\label{eq:MOLLI2}
    T_1 = T_1^* \left( \frac{B}{A} - 1 \right).
\end{equation}

\subsection{MLE Reconstruction}

Typically, the parameters in $\btheta$ are estimated by fitting the model in Equation~\eqref{eq:noise} using (regularized) maximum log-likelihood estimation (MLE).
Assuming the measurements, $y_n^{(i)} \sim \mathcal{N}(S_{\btheta_n}^{(i)}, \sigma^2)$, are independent leads to a voxel-wise optimization problem that minimizes the least squares error, formulated as,
\begin{align}
    \hat{\btheta}_{\mathrm{ML}}
        &= \arg \max_{\btheta} \sum_{i=1}^I \sum_{n=1}^N \log \mathcal{N}\big( y^{(i)}_n \mid S_{\btheta_n}^{(i)}, \sigma^2 \big) \nonumber\\
        &= \arg \min_{\btheta} \left[ \frac{1}{2\sigma^2} \sum_{i=1}^I \sum_{n=1}^N \big( y^{(i)}_n - S_{\btheta_n}^{(i)} \big)^2 \right], \label{eq:likelihood} \\
\end{align}
where $N$ denotes the total number of voxels.

\subsection{Bayesian Model}

A Bayesian model combines a likelihood, such as that in Equation~\eqref{eq:likelihood}, with a prior distribution, that accounts for available prior information or any other prior beliefs.
Specifically, a Bayesian model assigns a posterior distribution over the parameters, constructed using a likelihood and a prior, as
\begin{equation} \label{eq:bayes}
    p(\btheta \mid \by) = \frac{p(\by \mid \btheta)p(\btheta)}{\int p(\by \mid \btheta)p(\btheta) d\btheta}
    \propto
    p(\by \mid \btheta)p(\btheta),
\end{equation}
where $p(\btheta \mid \by)$, $p(\by \mid \btheta)$, and $p(\btheta)$ are the posterior distribution, the likelihood, and the prior distribution, respectively.
The proportionality is in the parameters, $\btheta$. Since it is impossible or at least computationally intractable to compute the integral in the denominator, an efficient means is to draw samples from the posterior using Markov chain Monte Carlo (MCMC) methods~\citep{hoffman2014no}.

\subsection{Total Variation}

The TV of a smooth intensity function $\xi$ defined on a domain $\Omega$ is
\begin{equation*}
    \TV(\xi) = \int_\Omega\lVert\nabla\xi\rVert_2\,d\Omega.
\end{equation*}
For a two-dimensional (2D) image, where the intensity is represented by pixel values in an $n$-by-$n$ matrix, $\bx$, the gradient $\nabla\xi$ can be approximated with a finite difference scheme on the pixel values,  for instance, the forward differences  
\begin{equation}
    \nabla_{i,j}\bx = \bigr(x_{i+1,j} - x_{i,j}, x_{i,j+1}-x_{i,j}\bigl)^T.\label{eq:FD}
\end{equation}

There are two common versions of the TV of $\bx$, an isotropic and an anisotropic version.
Using the scheme in Equation~\eqref{eq:FD}, the isotropic TV is defined as~\citep{rudin1992nonlinear}
\begin{align}
    \TV_{iso}(\bx)
        &= \sum_{i,j} \|\nabla_{i,j} \bx\|_2 \nonumber\\
        &= \sum_{i,j} \sqrt{\left|x_{i+1, j}-x_{i, j}\right|^2+\left|x_{i, j+1}-x_{i, j}\right|^2} \label{eq:isotropic_TV},
\end{align}
and the anisotropic TV as~\citep{esedoḡlu2004decomposition}
\begin{align}
    \TV_{aniso}(\bx)
        &= \sum_{i,j} \|\nabla_{i,j} \bx\|_1 \nonumber\\
        &= \sum_{i,j} \left|x_{i+1, j}-x_{i, j}\right|+ \left|x_{i, j+1}-x_{i, j}\right|. \label{eq:anisotropic_TV}
\end{align}
Although here the argument, $\bx$, is assumed to be a 2D structure (\textit{e.g.}, a 2D image), the definitions generalize trivially to $d$-dimensional structures. 

Let $\bar{\bx} \in \mathbb{R}^N$ be the vectorized (flattened) version of $\bx$.
The isotropic TV can then be expressed as
$$
    \TV_{iso}(\bx) = \sum_{i,j} \|\bD_{i,j}\bar{\bx}\|_2,
$$
and the anisotropic TV  as
$$
    \TV_{aniso}(\bx) = \sum_{i,j} \|\bD_{i,j}\bar{\bx}\|_1 = \|\bD\bar{\bx}\|_1,
$$
where $\bD_{i,j} \in \{-1, 0, 1\}^{d \times N}$ denotes the elements of a finite-difference matrix (\textit{e.g.}, implementing the forward difference scheme in Equation~\eqref{eq:FD}) that computes discrete derivatives along $d$ dimensions from $\bbarx$ taking into account the spatial indices of $\bx$. 
The operator $\bD$ denotes the global linear difference operator obtained by stacking all local operators $\bD_{i,j}$, thereby computing all finite differences of $\bbarx$ simultaneously.

\section{Proposed Bayesian Model}

Choosing the prior that best represents available prior information is vital for Bayesian inference.
In this work, we propose to construct a spatially structured prior based on the TV function, which promotes smoothness by assigning high probabilities to small spatial changes in the parameter maps. The proposed formulation defines a proper prior density with full support over the parameter domain, ensuring that all possible values retain nonzero prior probability. We further formulate $T_1$ mapping within a Bayesian framework and employ the proposed prior to encode smooth spatial structure, thereby enabling uncertainty quantification and improving estimation performance.

\subsection{Structured Priors Based on Total Variation}

In images, neighboring pixels are often strongly correlated, and in qMRI images, neighboring pixels often represent the same underlying tissues. Hence, the parametric maps are likely to be locally spatially smooth.
In this section, we first naively construct a prior based on TV but then show that the resulting distribution is not integrable and hence does not induce a proper prior. To address this, we propose a novel family of priors that contains an additional energy term that ensures they are integrable.

A prior distribution based on the TV function can naively be defined using the Boltzmann distribution (Gibbs distribution) as
\begin{equation}
    \label{eq:tv_prior}
    p_\TV(\bx) = \frac{1}{Z} e^{-\lambda \TVx},
\end{equation}
where $\lambda > 0$, the $Z$ is a normalizing constant, and $\TV$ represent $\TV_{iso}$ or $\TV_{aniso}$.
This distribution results in high probabilities for sparse spatial changes in the model parameters.
However, the normalizing constant,
\begin{equation}
   Z = \int_{\mathbb{R}^N} e^{- \lambda\TVx} d\bbarx,
\end{equation}
diverges due to invariance in the direction of $\mathbbm{1}$, \textit{i.e.}, for constant images. We have the following result.
\begin{lemma} 
\label{lem:tv}
    Let $\lambda \in \mathbb{R}_{+}$, the integral
    $$
        Z = \int_{\mathbb{R}^N} e^{- \lambda\TVx} d\bbarx
    $$
    diverges for both isotropic and anisotropic TV.
\end{lemma}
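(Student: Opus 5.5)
The plan is to exploit the invariance of $\TV$ along the constant direction $\mathbbm{1}\in\mathbb{R}^N$. Both $\TV_{iso}$ and $\TV_{aniso}$ depend on $\bbarx$ only through differences $x_{i+1,j}-x_{i,j}$ and $x_{i,j+1}-x_{i,j}$, that is, through $\bD\bbarx$. Since $\bD\mathbbm{1}=0$, we have $\TV(\bbarx+t\mathbbm{1})=\TV(\bbarx)$ for every $t\in\mathbb{R}$. This holds for either choice of norm on the local gradients, so a single argument covers both cases.

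First, change variables with an orthogonal decomposition. Let $\bu=\mathbbm{1}/\sqrt{N}$ and complete it to an orthonormal basis, writing $\bbarx = t\bu + \bv$ with $t\in\mathbb{R}$ and $\bv\in\bu^\perp\cong\mathbb{R}^{N-1}$. The Jacobian of this map is $1$. By the invariance, the integrand $e^{-\lambda\TV(\bbarx)}=e^{-\lambda\TV(\bv)}$ does not depend on $t$.

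Second, apply Tonelli's theorem, which is valid because the integrand is nonnegative and measurable (indeed continuous). This gives
\begin{equation*}
Z=\int_{\mathbb{R}}\left(\int_{\bu^\perp} e^{-\lambda\TV(\bv)}\,d\bv\right)dt .
\end{equation*}
The inner integral is a constant $C\in(0,\infty]$, and it is strictly positive because the integrand is continuous and strictly positive everywhere; in particular it is at least $e^{-\lambda\TV(0)}=1$ near the origin. Hence $Z=\int_{\mathbb{R}} C\,dt=\infty$.

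To avoid the question of whether $C$ is finite, one can instead bound $Z$ from below directly. On the cylinder $\{t\bu+\bv:\ t\in\mathbb{R},\ \|\bv\|_2\le 1\}$, continuity and the explicit form of $\TV$ give $\TV(\bv)\le K$ for some constant $K$; a crude bound such as $K=2\sqrt{2}\,N$ works in both the isotropic and anisotropic cases, since each difference is at most $2$ in absolute value. The integral over this cylinder is then at least $e^{-\lambda K}\,\mathrm{vol}(B_{N-1})\cdot\int_{\mathbb{R}}dt=\infty$.

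The only mildly delicate step is verifying that $\TV$ is exactly invariant under constant shifts. Boundary handling of the forward differences, for example replicate or periodic conventions at the last row and column, does not introduce any non-difference term, because every entry of $\bD$ applied to $\mathbbm{1}$ is a difference of equal entries. Once this is checked, the rest is routine.
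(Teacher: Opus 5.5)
Your argument is correct and is essentially the paper's proof: take an orthonormal basis whose first vector is $\mathbbm{1}/\sqrt{N}$, use $\bD\mathbbm{1}=\mathbf{0}$ to see that the integrand does not depend on that coordinate, and apply Fubini--Tonelli to get $Z=\int_{\mathbb{R}} C\,dk_1=\infty$; you also handle both TV variants at once, where the paper treats them separately. Your observation that only $C>0$ is needed is a real tightening, since the paper also asserts $C<\infty$, which the argument does not use. Two harmless slips should be fixed: $e^{-\lambda\TV}\le 1$ everywhere, so near the origin it is close to $1$ rather than at least $1$; and differences bounded by $2$ give $\TV_{aniso}\le 4N$ rather than $2\sqrt{2}N$, though any finite $K$ suffices.
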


\begin{proof}
Let $\bV = [\bv_1, \bv_2, \ldots, \bv_N]$ be an orthonormal basis in $\mathbb{R}^N$, where $\bv_1 = \frac{1}{\sqrt{N}}\mathbbm{1}$, where $\mathbbm{1} = (1,1,\dots,1)^T \in \mathbb{R}^N$ is the constant vector (with all elements being 1). Then $\bbarx = \bV \bk$, for some $\bk \in \mathbb{R}^N$.

For isotropic TV, $\bD_{i,j} \bv_1 = \mathbf{0}$ for any $i$ and $j$, thus,
\begin{align}
    \bD_{i,j}\bar{\bx}
        = \bD_{i,j} (\bV \bk)
        &= \bD_{i,j} \left( k_1 \bv_1 + \sum_{n=2}^N k_n \bv_n \right) \nonumber\\
        &= \bD_{i,j} \sum_{n=2}^N k_n \bv_n, \nonumber
\end{align}
which depends only on $\tilde{\bk} = (k_2, \ldots, k_N)^T$.
The integral for isotropic TV can thus be rewritten as
\begin{align}
    Z
        &= \int_{\mathbb{R}^N} e^{-\lambda\sum_{i,j} \|\bD_{i,j}\bar{\bx}\|_2} \, d\bbarx \nonumber\\
        &= \int_{\mathbb{R}^N}   e^{-\lambda\sum_{i,j}\left\| \bD_{i,j} \sum_{n=2}^N k_n \bv_n \right\|_2} \underbrace{|\det \bV|}_{=1}d\bk \label{eq:iso} \\
     &= \int_{-\infty}^\infty \left( \int_{\mathbb{R}^{N-1}} e^{-\lambda\sum_{i,j}\left\| \bD_{i,j} \sum_{n=2}^N k_n \bv_n \right\|_2} d \tk \right) dk_1. \nonumber
\end{align}

Let $C \;:=\; \int_{\mathbb{R}^{N-1}} e^{-\lambda\sum_{i,j}\left\| \bD_{i,j}\sum_{n=2}^N k_n\bv_n\right\|_2}\, d\tilde{\bk}$.
The integrand is strictly positive and bounded for all $\tilde{\bk}$, and therefore $C>0$. Moreover, since $\mathrm{null}(\TV) = \mathrm{span}\{\bv_1\}$, this guarantees exponential decay of the integrand at infinity and thus $C<\infty$.
Therefore,
\begin{align*}
 Z = \int_{-\infty}^{\infty} C \, dk_1 =
\lim_{R\to\infty} \int_{-R}^{R} C \, dk_1
= \lim_{R\to\infty} 2RC
= \infty.
\end{align*}

For anisotropic TV, $\bD \bv_1 = \mathbf{0}$, and thus,
\begin{equation*}
    \bD \bbarx = \bD (\bV \bk) = \bD \left( k_1 \bv_1 + \sum_{n=2}^N k_n \bv_n \right) = \sum_{n=2}^N k_n \bD \bv_n.
\end{equation*}
The integral for anisotropic TV can thus be rewritten as
\begin{align}
    Z &= \int_{\mathbb{R}^N} e^{-\lambda\|\bD\bbarx\|_1} \, d\bbarx \nonumber\\
      &= \int_{\mathbb{R}^N} e^{-\lambda\left\| \sum_{n=2}^N k_n \bD \bv_n \right\|_1} \underbrace{|\det \bV|}_{=1} d\bk \label{eq:aniso} \\
      &= \int_{-\infty}^\infty \left( \int_{\mathbb{R}^{N-1}} e^{-\lambda\left\| \sum_{n=2}^N k_n \bD \bv_n \right\|_1} d \tk \right) dk_1,
\end{align}

Let $C' \;:=\; \int_{\mathbb{R}^{N-1}} e^{-\lambda\left\| \sum_{n=2}^N k_n\bD\bv_n \right\|_1}\, d\tilde{\bk}$.
Again, $0 < C' < \infty$, and thus
\begin{equation*}
Z = \int_{-\infty}^{\infty} C' \, dk_1
=
\lim_{R\to\infty} \int_{-R}^{R} C' \, dk_1
= \lim_{R\to\infty} 2R C'
= \infty. 
\end{equation*}

Hence,
the normalizing constants diverge
for both $\TV=\TV_{iso}$ and $\TV=\TV_{aniso}$.
\end{proof}

In Lemma~\ref{lem:tv}, the normalization constant is shown to diverge when the integration is taken over the entire space $\mathbb{R}^N$, without imposing any constraints on the values of $\bar{\bx}$. 
However, in the practical quantitative MRI application of $T_1$ mapping considered in this work, the unknown parameter, $\bbarx$, represents $T_1$ relaxation times, and they are physically meaningful only for nonnegative values.
It is therefore natural to ask whether restricting the domain to the nonnegative orthant, $\mathbb{R}^N_+$, suffices to render the corresponding normalization constant finite.

The following Lemma~\ref{lem: tvx+} shows that this is not the case when the integration domain is restricted to $\mathbb{R}^N_+$, because the normalization constant associated with the $\TV$ prior remains divergent.
This result highlights that the divergence is not caused by allowing negative values, but rather is a direct consequence of the translation invariance of the TV semi-norm with respect to constant shifts, namely that $\TV(\bbarx) = \TV(\bbarx + c\,\mathbbm{1})$, $\; \forall\, c\in\mathbb{R}$.

\begin{lemma} 
\label{lem: tvx+}
Let $\bbarx \in \mathbb{R}^N_+$ and $\lambda \in \mathbb{R}_{+}$, the integral
    $$
        Z = \int_{\mathbb{R}_+^N} e^{- \lambda\TVx} d\bbarx
    $$
diverges for both isotropic and anisotropic TV.
\end{lemma}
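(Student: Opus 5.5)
Our plan is to exhibit an explicit subset of $\mathbb{R}^N_+$ of infinite volume on which $\TVx$ stays uniformly bounded. This gives a lower bound on $Z$ that diverges. The natural choice exploits the translation invariance $\TV(\bbarx+c\,\mathbbm{1})=\TV(\bbarx)$ noted before the statement. We consider the set
\[
    A_R = \{\, c\,\mathbbm{1} + \bu \;:\; c\in[1,R],\ \bu\in[0,1]^N \,\},
\]
or, more simply, the union over integer shifts $c = m\in\{1,2,\dots\}$ of the cubes $m\,\mathbbm{1}+[0,1)^N$. All of these lie in $\mathbb{R}^N_+$.

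The key steps, in order, are as follows. First, we note that for $\bu\in[0,1]^N$ every finite difference $x_{i+1,j}-x_{i,j}$ and $x_{i,j+1}-x_{i,j}$ has absolute value at most $1$. Hence $\TV_{aniso}(\bu)\le 2N$ and $\TV_{iso}(\bu)\le \sqrt{2}\,N$, and in either case $\TV(\bu)\le 2N=:M$. Second, by translation invariance, $\TV(m\,\mathbbm{1}+\bu)=\TV(\bu)\le M$, so the integrand satisfies $e^{-\lambda\TVx}\ge e^{-\lambda M}$ on each shifted cube $Q_m = m\,\mathbbm{1}+[0,1)^N$. Third, we check that the cubes $Q_m$ for distinct integers $m$ are pairwise disjoint, since their first coordinates lie in disjoint intervals $[m,m+1)$, and that each has Lebesgue measure $1$. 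Finally, because the integrand is nonnegative,
\[
    Z \;\ge\; \sum_{m=1}^{K}\int_{Q_m} e^{-\lambda\TVx}\,d\bbarx \;\ge\; K e^{-\lambda M}\xrightarrow[K\to\infty]{}\infty,
\]
which holds for both isotropic and anisotropic TV.

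There is an alternative route that follows the proof of Lemma~\ref{lem:tv} more closely. One uses the basis $\bV$, restricts $\tk$ to a small ball $B$, and observes that $k_1\ge \sqrt{N}\,r$ forces $\bbarx\in\mathbb{R}^N_+$ when $\|\tk\|\le r$. This would bound $Z\ge\int_{\sqrt N r}^\infty\int_B e^{-\lambda\TVx}\,d\tk\,dk_1=\infty$. I would present the cube argument instead, since it avoids the change of variables.

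The only step needing care is the positivity and disjointness bookkeeping, namely confirming that the shifted cubes stay inside $\mathbb{R}^N_+$ and that the TV bound is uniform in $m$. Both follow directly from translation invariance and the boundedness of differences on the unit cube. I do not expect a genuine obstacle. The main point to emphasize is that the divergence comes from the unbounded direction $\mathbbm{1}$, which remains inside the orthant.
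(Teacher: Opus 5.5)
Your proof is correct and follows essentially the same route as the paper, which also sweeps a bounded set along $\mathbbm{1}$ inside $\mathbb{R}^N_+$ (the points $t\mathbbm{1}+\bu$ with $\|\bu\|_\infty\le r$ and $t\in[r,s)$) and uses translation invariance to bound $\TV$ uniformly there. The differences are cosmetic: the paper gets its uniform bound $M_r$ from continuity on a compact set rather than your explicit $2N$, and it asserts that the volume diverges without argument, whereas your disjoint integer-shifted cubes make that step explicit.
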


\begin{proof}
     Fix $r>0$ and define the compact set
\begin{equation*}
    B_r := \{\bu\in\mathbb{R}^N : \|\bu\|_\infty \le r\}.
\end{equation*}
Since $\TV$ is continuous and $B_r$ is compact, there exists
\begin{equation*}
M_r := \max_{\bu\in B_r} \TV(\bu) < \infty.
\end{equation*}

Now define
\begin{equation*}
    S_r^s := \{\,\bbarx=t\mathbbm{1}+\bu : t \in [r,s),\ \bu\in B_r\,\}.
\end{equation*}
For any $\bbarx\in S_r^s$, each element thus satisfies $\bar{x}_k=t+u_k\ge r-r=0$ and  hence $S_r^s \subset \mathbb{R}^N_+$.
From Equation~\eqref{eq:isotropic_TV} and Equation~\eqref{eq:anisotropic_TV} it follows immediately that,
\begin{equation*}
    \TV(\bbarx)=\TV(t\mathbbm{1}+\bu)=\TV(\bu)\le M_r, \quad \forall\bbarx \in S_r^s,
\end{equation*}
and therefore
\begin{equation*}
    e^{-\lambda \TV(\bbarx)}\ge e^{-\lambda M_r}, \quad \forall \bbarx\in S_r^s.
\end{equation*}

Consequently, for any $s > r$
\begin{align*}
    Z &= \int_{\mathbb{R}_+^N} e^{-\lambda \TV(\bbarx)}\,d\bbarx
    \;\ge\;
    \int_{S_r^s} e^{-\lambda \TV(\bbarx)}\,d\bbarx\\
    &\;\ge\; e^{-\lambda M_r}\int_{S_r^s} d\bbarx
    \;=\; e^{-\lambda M_r}\,\mathrm{Vol}(S_r^s),
\end{align*}
where $\mathrm{Vol}(S_r^s)$ denotes the volume of the set $S_r^s$.
Taking the limit as $s\to\infty$ yields
\begin{equation*}
    Z \ge \lim_{s\to\infty} e^{-\lambda M_r}\,\mathrm{Vol}(S_r^s) = \infty.
\end{equation*}
\end{proof}

\citet{lofstedt2020bayesian} proposed to incorporate a uniform prior alongside the TV prior, such that the TV was only defined within the domain of the uniform prior, to force a proper prior on the model parameter and to ensure convergence during the sampling process.
This constraint bounds the model parameters to be in a certain range, making the corresponding integral finite.
Our previous work~\citep{disi2025} proposed an $\ell_1$-based modification of the anisotropic TV prior to ensure that it defines a proper probability distribution. Building on that formulation, the present work generalizes the prior by incorporating an $\ell_p$ norm with $p \geq 1$ into the isotropic or anisotropic TV function.
We propose the following modified TV function.

\begin{lemma} \label{lem: tvl1}
    Let $\mathrm{TV}_\mu^p$ be a modified TV function, such that
    \begin{equation}
       \mathrm{TV}_{\mu}^p(\bx)
            = \mathrm{TV}(\bx) + \mu\|\bbarx\|_p,
    \end{equation}
    where $\TV=\TV_{iso}$ or $\TV=\TV_{aniso}$, $\mu \in \mathbb{R}_{+}$, and $\|\cdot\|_p$ is an $\ell_p$ norm with $p\geq1$.
    Then the integral
    $$
        Z = \int_{\mathbb{R}^N} e^{-\lambda\mathrm{TV}_\mu^p(\bx)} \,d\bx < \infty,
    $$
    for all $\lambda \in \mathbb{R}_{+}$.
\end{lemma}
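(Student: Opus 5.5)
The plan is to drop the TV term entirely and reduce to an integral involving only the norm. Both $\TV_{iso}$ and $\TV_{aniso}$ are sums of norms, so $\TV(\bx)\ge 0$ for every $\bx$. Since $\lambda>0$, this gives the pointwise bound
\begin{equation*}
e^{-\lambda\mathrm{TV}_\mu^p(\bx)} = e^{-\lambda\TV(\bx)}\,e^{-\lambda\mu\|\bbarx\|_p} \le e^{-\lambda\mu\|\bbarx\|_p}.
\end{equation*}
It therefore suffices to show that $\int_{\mathbb{R}^N} e^{-c\|\bbarx\|_p}\,d\bbarx<\infty$ with $c:=\lambda\mu>0$. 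The integrand is continuous and positive, so $Z$ is also strictly positive.

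Next, I will use norm comparison on $\mathbb{R}^N$ to reach a separable integrand. For $p\ge 1$ we have $\|\bbarx\|_p\ge\|\bbarx\|_\infty\ge N^{-1/2}\|\bbarx\|_2$. Even more simply, $\|\bbarx\|_p \ge N^{1/p-1}\|\bbarx\|_1$ by H\"older's inequality, and this also holds for $p=\infty$ if one wants to include it. Setting $c' := c\,N^{1/p-1}>0$, Tonelli's theorem gives
\begin{equation*}
\int_{\mathbb{R}^N} e^{-c\|\bbarx\|_p}\,d\bbarx \le \int_{\mathbb{R}^N} e^{-c'\sum_{k}|\bar x_k|}\,d\bbarx = \prod_{k=1}^N \int_{-\infty}^{\infty} e^{-c'|t|}\,dt = \left(\frac{2}{c'}\right)^{N} < \infty.
\end{equation*}
Putting the two bounds together yields $0<Z<\infty$ for every $\lambda>0$, which means the prior is proper.

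None of these steps should be a serious obstacle. The only point that needs care is the direction and constant in the norm inequality: it must be a lower bound on $\|\cdot\|_p$ in terms of $\|\cdot\|_1$ with a positive constant. That constant is $N^{1/p-1}$, from H\"older's inequality $\|\bbarx\|_1\le N^{1-1/p}\|\bbarx\|_p$. An alternative that avoids explicit constants is to pass to polar coordinates and use $\|\bbarx\|_p\ge m\|\bbarx\|_2$, where $m=\min_{\|\bu\|_2=1}\|\bu\|_p>0$ exists by compactness of the sphere. This reduces the problem to the one-dimensional integral $\int_0^\infty r^{N-1}e^{-cmr}\,dr=\Gamma(N)/(cm)^N$, which is finite.

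Finally, I will note why this construction works where Lemma~\ref{lem:tv} fails: the added term $\mu\|\bbarx\|_p$ removes the translation invariance along $\mathbbm{1}$.
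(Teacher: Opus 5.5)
Your proposal is correct and follows the paper's own argument. You drop the nonnegative TV term, use H\"older's inequality $\|\bar{\mathbf{x}}\|_1 \le N^{1-1/p}\|\bar{\mathbf{x}}\|_p$ to dominate the integrand by a separable Laplace-type one, and factor it to get the bound $\bigl(2N^{1-1/p}/(\lambda\mu)\bigr)^N$. Your justification that both TV variants are sums of norms, and hence nonnegative, also covers the isotropic case more explicitly than the paper's proof, which writes only the anisotropic term $\|\mathbf{D}\bar{\mathbf{x}}\|_1$.
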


\begin{proof}
    Note that $\lambda \|\bD \bbarx\|_1 \geq 0$ for any $\bbarx \in \mathbb{R}^N$, so we can write
    \begin{align*}
        \int_{\mathbb{R}^N} e^{-\lambda\,\mathrm{TV}_\mu^p(\bx)} \, d\bbarx
        &= \int_{\mathbb{R}^N} e^{-\lambda \|\bD \bbarx\|_1 - \lambda \mu \|\bbarx\|_p} \, d\bbarx \\
        &\leq \int_{\mathbb{R}^N} e^{-\lambda \mu \|\bbarx\|_p} \, d\bbarx \\
        &\leq \int_{\mathbb{R}^N} e^{-\lambda \mu \|\bbarx\|_1 / N^{1 - \frac{1}{p}}} \, d\bbarx \\
        &= \prod_{n=1}^N \int_{-\infty}^{\infty} e^{-\lambda \mu |x_n| / N^{1 - \frac{1}{p}}} \, dx_n \\
        &= \left( \frac{2 N^{1 - \frac{1}{p}}}{\lambda \mu} \right)^N < \infty.
    \end{align*}
    The second inequality follows by noting that when $p=1$ then $N^{1-\frac{1}{p}}=1$, and that when $p>1$ Hölder's inequality gives \(\|\bx\|_1 \leq N^{1 - \frac{1}{p}} \|\bx\|_p\).
\end{proof}

A geometric illustration of Lemmas~\ref{lem:tv} and~\ref{lem: tvl1} is provided in Figure~\ref{fig:TVMU}.  
In Figure~\ref{fig:TVMU}(a), along the subspace spanned by \(\mathbbm{1}\), we have \(\|\bD \bbarx\|_1 = 0\), and thus the integrand \(e^{-\lambda \|\bD \bbarx\|_1}\) remains constant. As a result, integration along this unbounded direction leads to divergence of the integral. 
In contrast, Figures~\ref{fig:TVMU}(b) and \ref{fig:TVMU}(c) illustrate the integrand \(e^{-\mathrm{TV}_{\mu}^p(\bx)}\), where the inclusion of the additional \(\|\bbarx\|_p\) term induces exponential decay in all directions of \(\mathbb{R}^n\). This ensures that the integral converges and is finite.

\begin{figure*}[h!]
    \centering
    \includegraphics[height=0.3\linewidth]{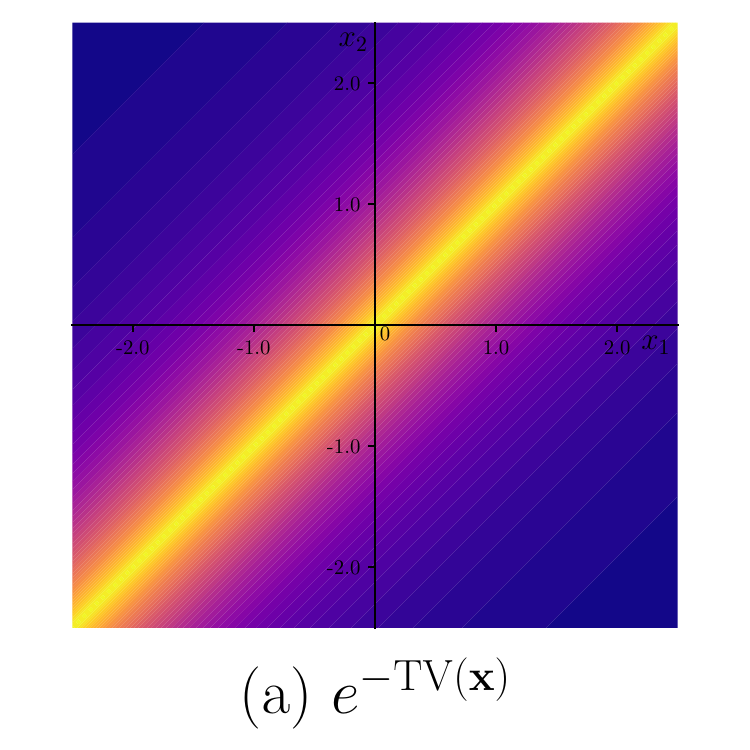}
    \includegraphics[height=0.3\linewidth]{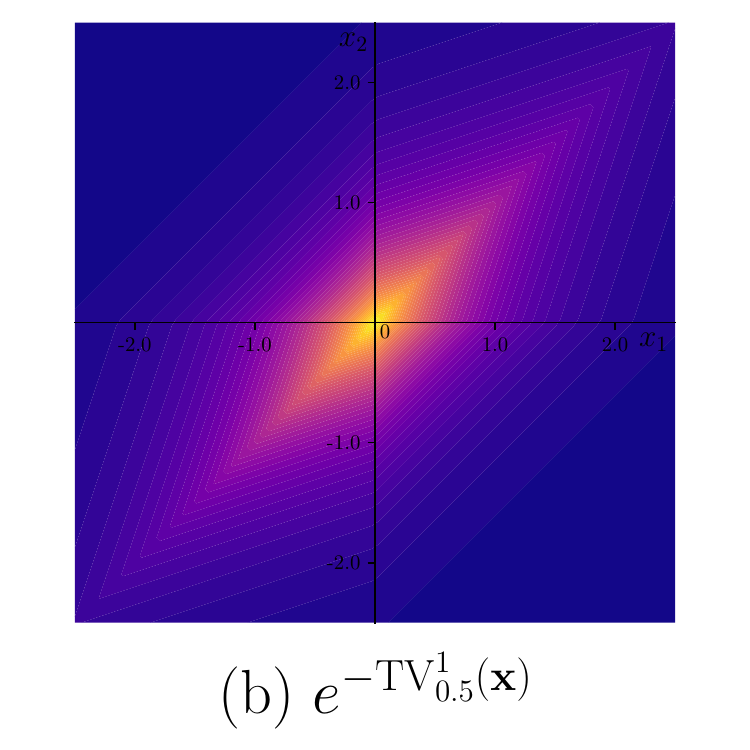}
    \includegraphics[height=0.3\linewidth]{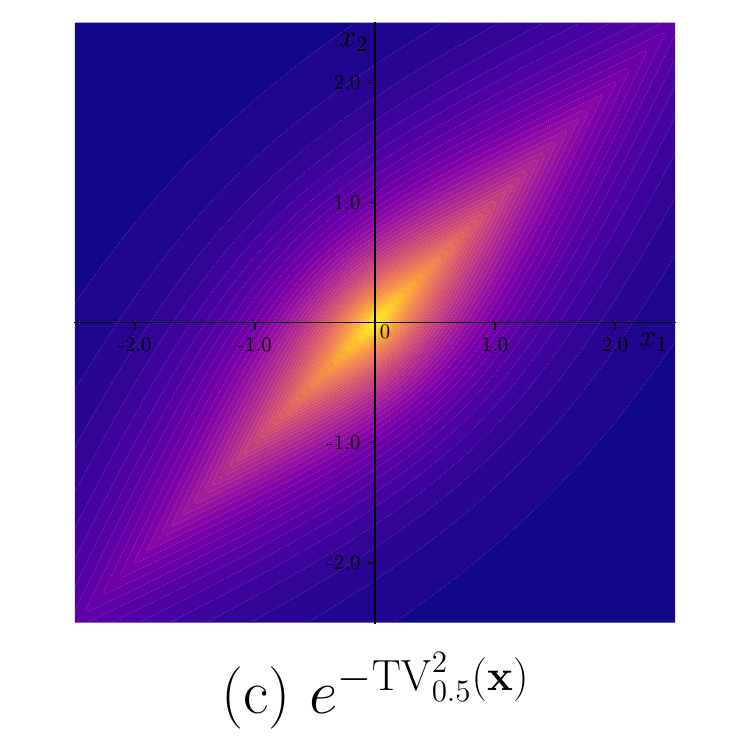}
    \raisebox{1.52em}{\includegraphics[height=0.2675\linewidth]{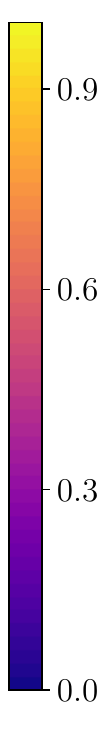}}
    \caption{2D heatmaps for the unnormalized priors (a): $e^{-\TV(\bx)}$; (b): $e^{-\TV_{0.5}^1(\bx)}$ and (c): $e^{-\TV_{0.5}^2(\bx)}$.}
    \label{fig:TVMU}
\end{figure*}

We see now that a finite normalization constant leads to a proper prior.
\begin{theorem}
    The modified prior,
    \begin{equation*}
        p_{TV_\mu^p}(\bx) = \frac{1}{Z}  e^{-\lambda \TV_\mu^p(\bx)},
    \end{equation*}
    is a proper prior.
\end{theorem}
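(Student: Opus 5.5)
The plan is to check the defining properties of a proper prior directly: the density must be nonnegative and measurable on $\mathbb{R}^N$, and its total mass must equal one. This requires that the normalizing constant $Z = \int_{\mathbb{R}^N} e^{-\lambda\TV_\mu^p(\bx)}\,d\bbarx$ be strictly positive and finite. Finiteness is exactly the conclusion of Lemma~\ref{lem: tvl1}, so most of the work is already done. What remains is positivity and measurability.

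First, $\TV_\mu^p$ is continuous on $\mathbb{R}^N$. Both $\TV_{iso}$ and $\TV_{aniso}$ are finite sums of norms of linear maps $\bD_{i,j}\bbarx$, and $\|\bbarx\|_p$ is a norm. Hence $e^{-\lambda\TV_\mu^p(\bx)}$ is continuous, Borel measurable, and takes values in $(0,1]$.

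For $Z>0$, the integrand is strictly positive everywhere, so its integral is positive. To be concrete, on the unit ball $\{\|\bbarx\|_\infty\le 1\}$ the continuous function $\TV_\mu^p$ attains a finite maximum $M$, so the integrand is bounded below by $e^{-\lambda M}$. This gives $Z \ge 2^N e^{-\lambda M} > 0$. Combining with Lemma~\ref{lem: tvl1}, we get $0<Z<\infty$ for every $\lambda,\mu>0$ and $p\ge 1$, and for either choice of TV.

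Finally, $p_{TV_\mu^p} = Z^{-1}e^{-\lambda\TV_\mu^p}$ is a well-defined nonnegative measurable function, and by construction $\int p_{TV_\mu^p}\,d\bbarx = Z/Z = 1$. It is therefore a probability density on $\mathbb{R}^N$, that is, a proper prior. Since the density is strictly positive, it also has full support.

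There is no real obstacle, since finiteness is supplied by Lemma~\ref{lem: tvl1}. The only point needing care is that Lemma~\ref{lem: tvl1}'s displayed proof uses the anisotropic form $\|\bD\bbarx\|_1$. For the isotropic case I would note that the argument only uses $\TV(\bx)\ge 0$, which holds for $\TV_{iso}$ as well, so the same chain of inequalities applies.
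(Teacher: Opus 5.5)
Your proposal is correct and follows the same route as the paper, which simply invokes Lemma~\ref{lem: tvl1} to get $Z<\infty$ and then normalizes. Your additional checks are all sound, and the paper leaves them implicit: $Z>0$ via the lower bound on the $\ell_\infty$ unit cube, measurability, and the observation that the lemma's bound only uses $\TV(\bx)\ge 0$ and therefore also covers the isotropic case.
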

\begin{proof}
    Follows from Lemma~\ref{lem: tvl1}, which says we can normalize it to unit integral.
\end{proof}

The generalized $\mathrm{TV}_{\mu}^p$ prior broadens the class of valid TV-based spatial priors and enhances modeling flexibility, allowing better adaptation to varying image characteristics and regularization requirements.

\subsection{Posterior Based on Total Variation}

We propose a Bayesian framework for $T_1$ mapping, where the likelihood is based on the signal models in Equations~\eqref{eq: vfa} or~\eqref{eq:MOLLI1}, and spatial prior information is incorporated through a generalized $\mathrm{TV}_\mu^p$ prior. A half-normal prior with scale parameter $\sigma_\epsilon > 0$ is placed on $\sigma$ to enforce positivity and encode prior knowledge about the noise level. The posterior is then given by,
\begin{align}
    &p_{TV_\mu^p}(\btheta,\sigma \mid \by)
        \propto p(\by \mid \btheta,\sigma) p_{TV_\mu^p}(\btheta)p(\sigma) \\
    &\;\; = \left(\sqrt{2 \pi \sigma^2} \right)^{-N I} \exp \left(-\frac{1}{2 \sigma^2}  \sum_{i=1}^I \sum_{n=1}^N\big(y_n^{(i)}-S_{\btheta_n}^{(i)}\big)^2\right) \nonumber\\
    &\qquad \cdot
        \prod_{\btheta} \frac{1}{Z_{\btheta}}  e^{-\lambda \TV_\mu^p(\btheta)} \cdot \frac{\sqrt{2}}{\sigma_\epsilon \sqrt{\pi}} \exp \left(-\frac{\sigma^2}{2 \sigma_\epsilon^2}\right), \nonumber
\end{align}
where for VFA acquisitions, $\btheta = (T_1,M_0)$, and for MOLLI acquisitions, $\btheta = (T_1^*, A, B)$. Since the overall normalizing constant is intractable, we sample from the unnormalized posterior using MCMC to quantify the uncertainty in the estimated parameters.

We further extend the Bayesian framework for $T_1$ mapping to a hierarchical model by assigning hyper-priors to the hyper-parameters $\lambda$ and $\mu$, which control the strength and shape of the generalized $\mathrm{TV}_\mu^p$ prior. The hierarchical model is defined as,
\begin{align}
    p_{TV_\mu^p} & (\btheta,\sigma, \lambda, \mu \mid \by) \\
        &\qquad \propto p(\by \mid \btheta,\sigma,\lambda,\mu) p_{TV_\mu^p}(\btheta \mid \mu, \lambda) p(\sigma) p(\lambda) p(\mu). \nonumber
\end{align}

\section{Experiments}

In the experiments, we evaluated the anisotropic $\TV_\mu^p$ prior with $p \in \{1,2\}$ for VFA and MOLLI $T_1$ mapping within the Bayesian models on both synthetic and real datasets. The performance was compared to that of MLE and alternative Bayesian models using uniform, Gamma, and bounded TV priors. The hyperparameters $\lambda$ and $\mu$ in the $\TV_\mu^p$ prior were selected either by assigning hyperpriors, referred to as the Hierarchical $\TV_\mu^p$ model, or through hyperparameter tuning, denoted simply as the $\TV_\mu^p$ model.

For the Hierarchical $\TV_\mu^p$ model, the $\lambda$ and $\mu$ are constrained to be non-negative and were therefore assigned exponential priors. The rate parameters of these exponential priors were determined using Bayesian optimization.
For the $\TV_\mu^p$ model, $\lambda$ and $\mu$ were treated as fixed hyperparameters and were tuned using Bayesian optimization.
During the Bayesian optimization, the optimal combination of hyperparameters was selected as the one that maximized the expected log point-wise predictive density~\citep[elpd;][]{vehtari2017practical}, estimated using the widely applicable information criterion \citep[WAIC;][]{watanabe2010asymptotic}, defined as,
\begin{align}
     &\text{WAIC} \approx \nonumber\\
     &\; \frac{1}{NI} \sum_{i=1}^I \sum_{n=1}^N \log \left(\frac{1}{S} \sum_{s=1}^{S} p\big(y^{(i)}_n \mid \btheta_n^s,\sigma^s\big)\right) \\
     &\quad\;\, - \frac{1}{NI} \sum_{i=1}^I \sum_{n=1}^N \frac{1}{S-1} \sum_{s=1}^{S}\left(\log p\big(y^{(i)}_n \mid \btheta_n^s,\sigma^s\big)-\mu^{(i)}_n\right)^2, \nonumber
\end{align}
where $\mu^{(i)}_n=\frac{1}{S} \sum_{s=1}^{S} \log p\big(y^{(i)}_n \mid \btheta_n^s, \sigma\big)$, and $\btheta_n^s$ and $\sigma^s$ with $ s=1,2,\dots,S$ denote samples drawn from the posterior distribution.
The parameter $p \in \{1,2\}$ in the $\TV_\mu^p$ model was also selected as the value that maximizes the elpd.

\subsection{Baselines}

MLE formulates the estimation as a voxel-wise optimization problem that minimizes the sum of squared errors, expressed as in Equation~\eqref{eq:likelihood}. The MLE solution was found using the L-BFGS-B algorithm~\citep{liu1989limited}.
The baseline priors used in the experiments were $\mathrm{Uniform}(0, 50)$, Gamma$(3,1)$ and the bounded TV prior.
Specifically, the $\mathrm{Uniform}(0, 50)$ prior represent weak prior knowledge. It enforces non-negativity and restricts the parameter to a physically meaningful range that covers all expected parameter values across tissues and acquisition settings. 
The Gamma$(3,1)$ prior was used to enforce non-negativity while providing moderate concentration around physiologically realistic $T_1$ values, offering more informative guidance than the $\mathrm{Uniform}(0, 50)$ prior.
The bounded TV prior was defined as~\citep{lofstedt2020bayesian}
\begin{equation*}
    p_{b\TV}(\bx) = p_{\TV}(\bx) \, p_{\mathrm{uni}}(\bx \mid L, H), \;\; \text{with} \;\; L = 0, \, H = 50,
\end{equation*}
where the uniform prior, \(p_{\mathrm{uni}}(\bx \mid L, H)\), imposes hard bounds on the parameter range.
The hyperparameter $\lambda$ was included in the bounded TV prior as well, and was selected through Bayesian optimization.

\subsection{Data}

The experimental evaluation was performed using datasets comprising both synthetic and real $T_1$ mapping data.
The synthetic brain $T_1$ mapping data were simulated using Equations~\ref{eq: vfa} and~\ref{eq:noise} with $TR = 6.8$ ms, $TE = 2.1$ ms, and flip angles $\alpha \in \{2^\circ, 4^\circ, 11^\circ, 13^\circ, 15^\circ\}$.
The ground truth values of $T_1$ and $M_0$ were generated from the BrainWeb phantom\footnote{https://brainweb.bic.mni.mcgill.ca/brainweb} using Hero Imaging\footnote{https://www.heroimaging.com}.
The parameter maps generated had a matrix size of $256 \times 256$ and a voxel size of $0.98 \times 0.98 \times 2.00$~mm$^3$.
The added noise was complex circular Gaussian noise.

For the synthetic cardiac $T_1$ mapping data, the ground-truth $T_1$ maps and inversion time were obtained from a publicly available dataset~\citep{le2022accelerated}. 
Specifically, the inversion times were
$$
    \mathrm{TI} \in [128,\, 208,\, 881,\, 951,\, 1628,\, 1711,\, 2381,\, 3131]~\mathrm{ms}.
$$
Synthetic measurements were generated using the MOLLI signal model described in 
Equations~\eqref{eq:MOLLI1} and~\eqref{eq:MOLLI2}, with additive Gaussian noise applied according to Equation~\eqref{eq:noise}.

We also used a real breast MRI dataset from the publicly available 
\textit{QIN-BREAST-02} collection\footnote{\href{https://www.cancerimagingarchive.net/collection/qin-breast-02}{https://www.cancerimagingarchive.net/collection/qin-breast-02}}~\citep{Yankeelov2019_QINBREAST2}. 
The data were acquired using a radio frequency-spoiled 3D GRE VFA technique with ten flip angles ranging from 
$2^\circ$ to $20^\circ$ in $2^\circ$ increments. The acquisition matrix was $192 \times 192 \times 20$, covering the full breast in the sagittal plane over a $22~\mathrm{cm}^2$ field of view, with a slice thickness of $5~\mathrm{mm}$.
From this dataset, we selected one subject (\texttt{QIN-BREAST-02-0011}) diagnosed with \textit{left-sided invasive ductal carcinoma}. 
The subject underwent neoadjuvant chemotherapy consisting of 
doxorubicin and cyclophosphamide, 
followed by paclitaxel. 

\subsection{Implementation details}

All Bayesian models were developed using PyMC 5.15.0\footnote{https://www.pymc.io}. 
Samples were drawn from the posterior distributions using the No-U-Turn Sampler algorithm~\citep[NUTS;][]{hoffman2014no}. Four independent Markov chains were run, each generating $\text{2,000}$ samples after discarding $\text{2,000}$ burn-in iterations. Convergence was evaluated by inspecting trace plots and calculating the $\hat{R}$ statistic, which assesses the consistency between and within chains to confirm reliable mixing and sampling~\citep{gelman1992inference}.

The results from the different methods were evaluated by analyzing the estimated posterior distributions, as well as the bias and variance of the posterior samples.
Specifically, the probability density functions (PDFs) of $T_1$ values were estimated using kernel density estimation with Gaussian kernels~\citep{silverman2018density}, based on posterior samples for each method. 
PDFs in representative regions were then visualized to examine their shape and spread, thereby providing insight into the estimation uncertainties.
The uncertainty was further quantified by computing the credible interval spanning two standard deviations around the posterior mean.
Bias and variance were also used for quantitative comparisons and were estimated as
\begin{equation}\label{eq:bias}
     \mathrm{Bias} = \frac{1}{S} \sum_{s=1}^{S} (\btheta^{s} - \btheta), \;\; \text{and}
\end{equation}
\begin{equation}\label{eq:variance}
    \mathrm{Varance} = \frac{1}{S-1} \sum_{s=1}^{S} \left( \btheta^{s} - \btheta_{\mathrm{mean}} \right)^2,
\end{equation}
where $\btheta$ denotes the ground truth, $\btheta^s$ is the $s$-th posterior sample, and $\btheta_{\mathrm{mean}} = \frac{1}{S} \sum_{s=1}^{S} \btheta^s$ is the empirical posterior mean.

\section{Results and Discussion}

This section presents the $T_1$ mapping results obtained on the synthetic Brain and Cardiac MRI datasets, as well as the real Breast MRI dataset, to compare all baseline and proposed methods.
The synthetic experiments provide controlled settings for quantitative evaluation against known ground truth, allowing us to investigate the influence of different priors. In contrast, the real-data experiments assess the practical applicability of the methods under realistic imaging conditions, where unknown complex noise are presented.

\subsection{Brain $T_1$ Mapping}

For the Brain $T_1$ Mapping, the $\TV_\mu^1$ model had the highest epld, so $p=1$ was selected. Figure~\ref{fig:pdf} shows the PDFs of the estimated $T_1$ values within the regions of interest (ROIs). The corresponding summary statistics are provided in Table~\ref{tab:brain}.
The bias and variance for each Bayesian method are visualized as spatial maps in Figure~\ref{fig:BiasMap}, alongside the mean $T_1$ maps, and as overall comparison metrics across all voxels in Figure~\ref{fig:BiasVar}.

\begin{figure*}[]
    \centering
    \includegraphics[height=0.2\linewidth]{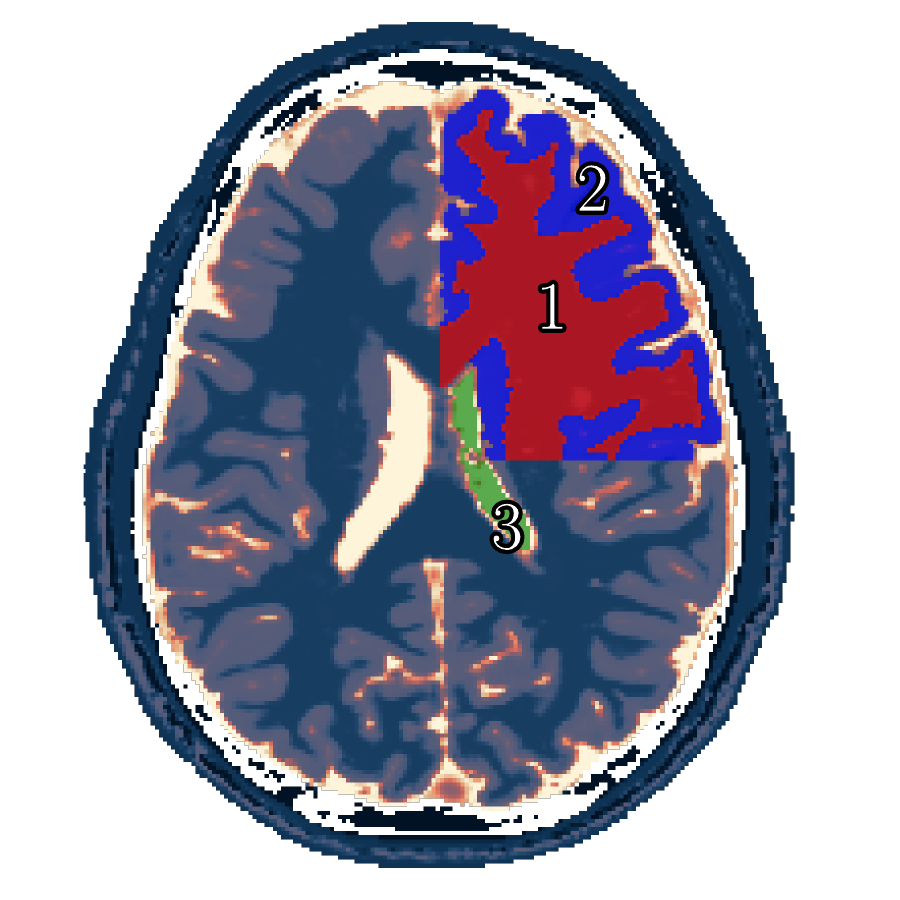}
    \includegraphics[height=0.2\linewidth]{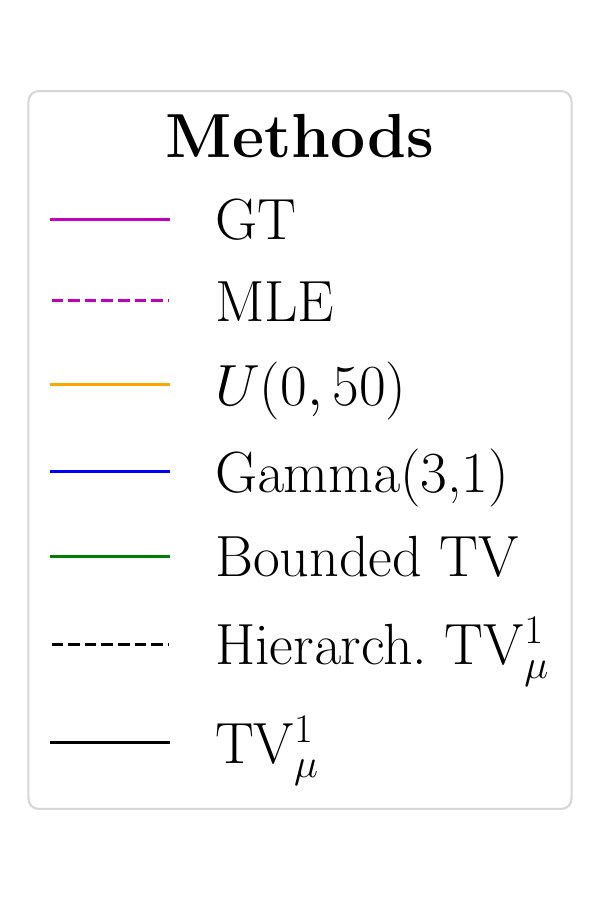}
    \includegraphics[height=0.2\linewidth]{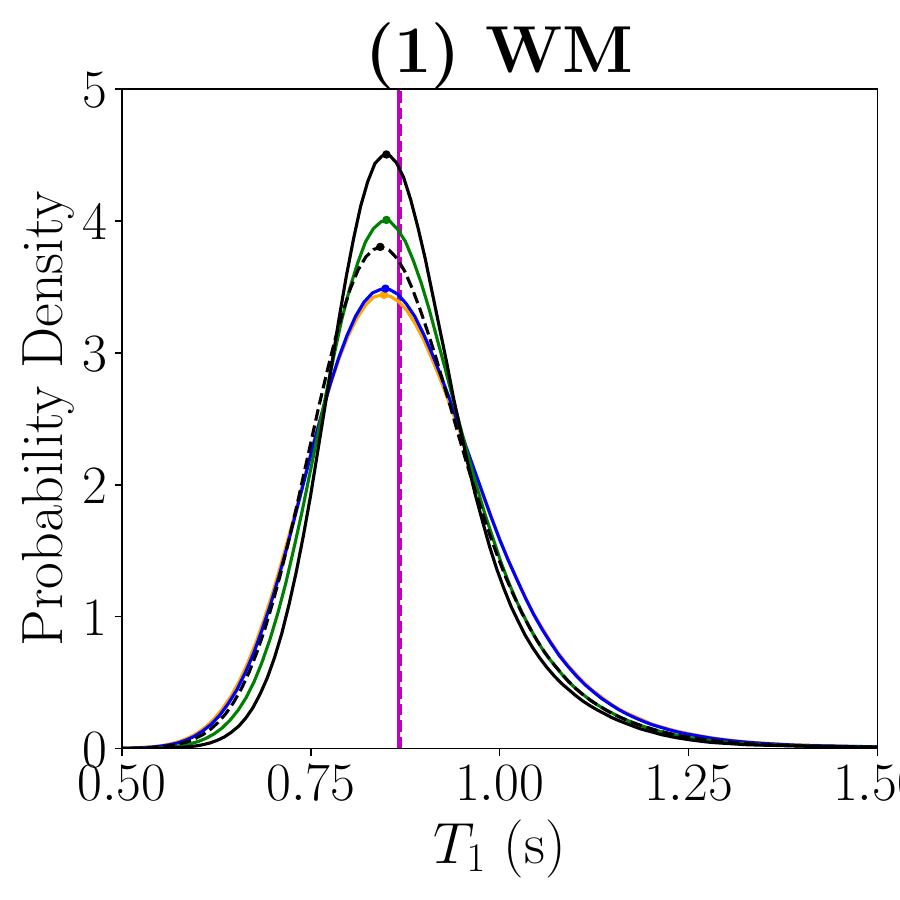}
    \includegraphics[height=0.2\linewidth]{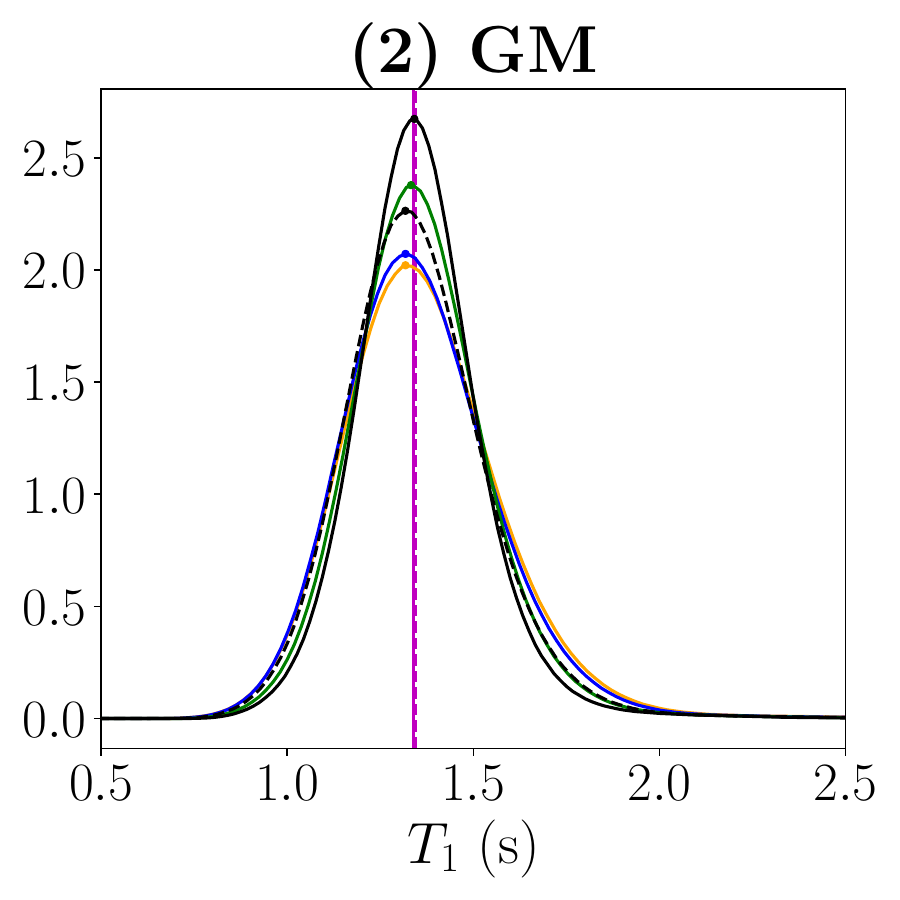}
    \includegraphics[height=0.2\linewidth]{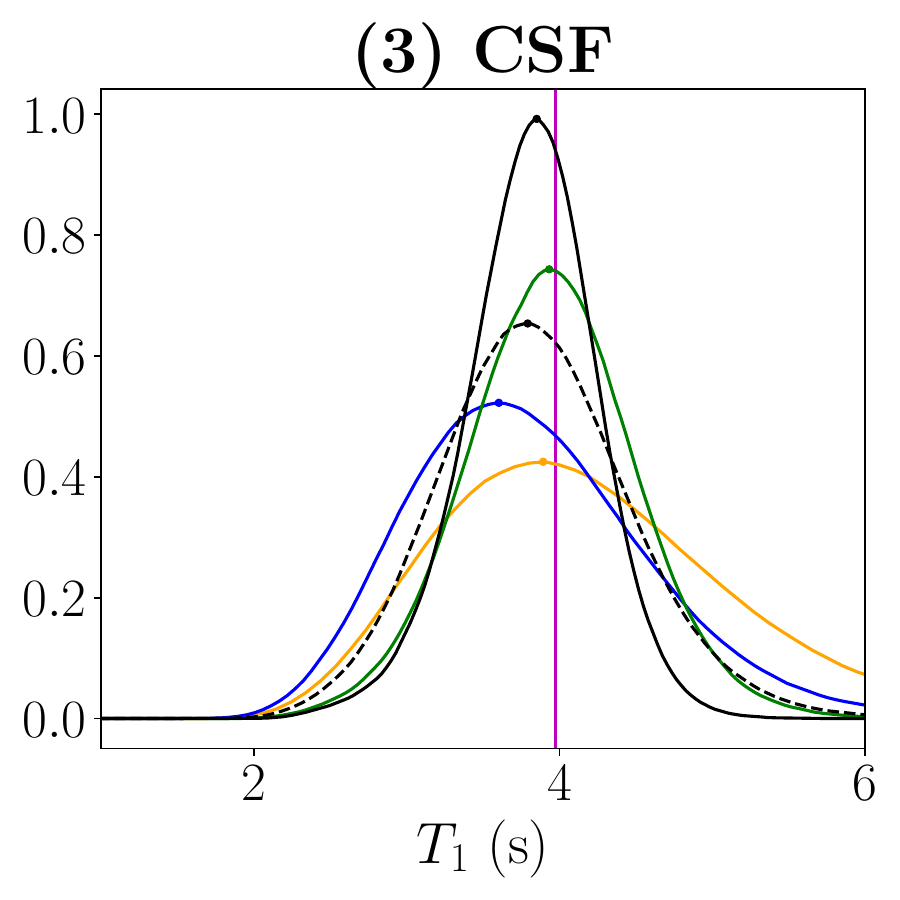}
    \caption{PDFs of estimated $T_1$ values within three selected ROIs. The ROIs correspond to (1) white matter (WM), (2) gray matter (GM), and (3) cerebrospinal fluid (CSF), highlighted in red, blue, and green, respectively, in the anatomical reference image. For each ROI, the PDFs from different Bayesian methods are shown, along with the corresponding ground truth (GT) and MLE $T_1$ values for comparison.}
    \label{fig:pdf}
\end{figure*}

\begin{table*}[]
\centering
\resizebox{0.97\textwidth}{!}{%
\begin{tabular}{lccccccc}
\toprule
\textbf{Tissue} & \textbf{GT} & \textbf{MLE} & $\mathbf{U(0,50)}$ & $\mathbf{Gamma(3,1)}$ & \textbf{Bounded}~$\boldsymbol{\TV}$ & \textbf{Hierarch.}~$\boldsymbol{\TV}^1_\mu$ & $\boldsymbol{\TV}^1_\mu$ \\
\midrule
\textbf{WM} & 0.87 & 0.87 & 0.85, 0.88 $\pm$ 0.26 & 0.85, 0.88 $\pm$ 0.26 & 0.85, 0.88 $\pm$ 0.23 & 0.84, 0.87 $\pm$ 0.24 & 0.85, 0.88 $\pm$ \textbf{0.21} \\
\textbf{GM} & 1.34 & 1.34 & 1.32, 1.37 $\pm$ 0.44 & 1.32, 1.36 $\pm$ 0.43 & 1.33, 1.36 $\pm$ 0.38 & 1.32, 1.35 $\pm$ 0.40 & 1.34, 1.36 $\pm$ \textbf{0.35}  \\
\textbf{CSF} & 3.97 & 32.01 & 3.89, 4.21 $\pm$ 2.07 & 3.60, 3.81 $\pm$ 1.59 & 3.93, 3.94 $\pm$ 1.12 & 3.79, 3.84 $\pm$ 1.26 & 3.85, 3.81 $\pm$ \textbf{0.86} \\
\bottomrule
\end{tabular}
}
\caption{Statistics of the estimated $T_1$ values in (1) WM, (2) GM, and (3) CSF across the different methods. For GT and MLE, the table reports the mean $T_1$ values. For each Bayesian model, as indicated in the top column, the table reports the posterior mode (\textit{i.e.}, the value with the highest probability density), the posterior mean, and the associated uncertainty expressed as two standard deviations around the mean, in the format ``$\mathrm{mode}, \mathrm{mean} \pm 2\sigma$''. All T$_1$ values are in seconds (s).}
\label{tab:brain}
\end{table*}

\begin{figure*}[]
    \centering
    \includegraphics[width=0.91\linewidth]{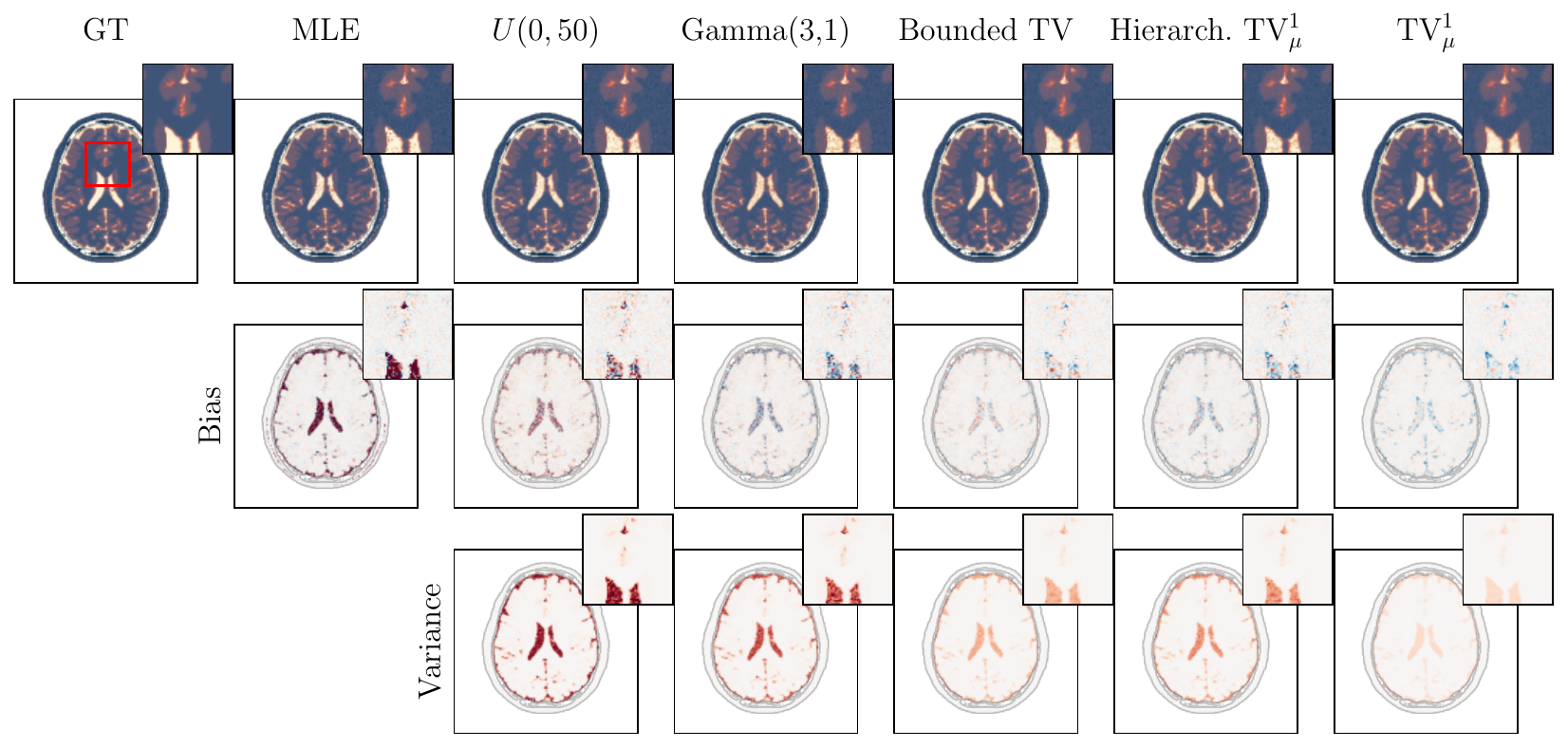}
    \includegraphics[height=0.395\linewidth]{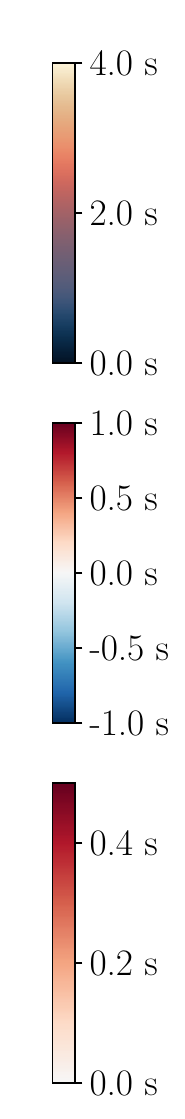}
    \caption{Comparison of mean $T_1$, bias and variance maps from different methods. The first row displays the GT $T_1$ map, the MLE point estimate, and the mean $T_1$ maps obtained from each Bayesian model, as labeled at the top of the figure. The second row presents the corresponding bias maps, while the third row shows the variance maps. Each subplot includes a zoomed-in view of a region (marked by the red square) in the top-right corner for detailed comparison.}
    \label{fig:BiasMap}
\end{figure*}

\begin{figure*}[]
    \centering
    \includegraphics[width=0.99\linewidth]{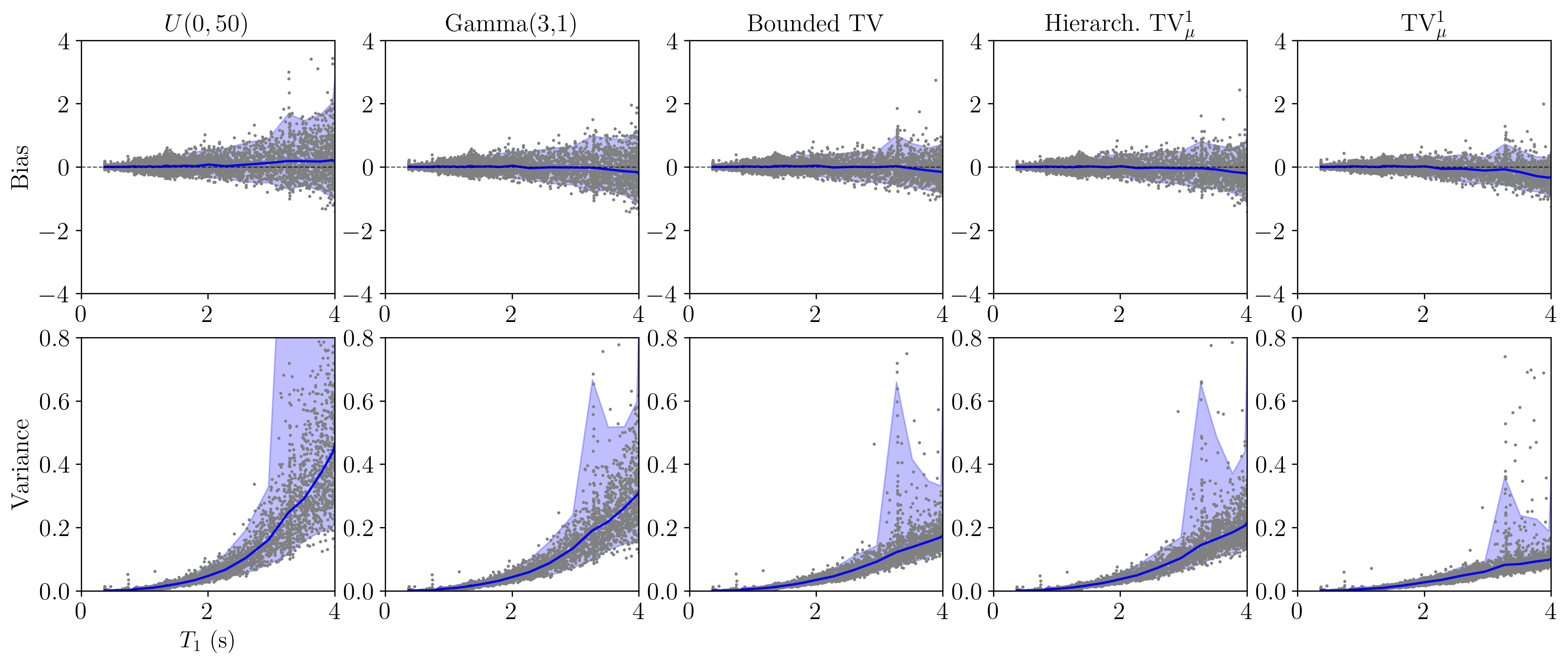}
    \caption{Bias and variance analysis for the Bayesian methods. In contrast to Figure~\ref{fig:BiasMap}, which presents spatially resolved maps, this figure illustrates the relationship between GT $T_1$ values and the corresponding bias or variance of the estimates. The horizontal axis shows the GT $T_1$ values, while the vertical axis indicates the bias/variance observed at each voxel. The blue curves represent the average bias/variance for a certain GT $T_1$ value. The blue shaded area contains 95\% of the data points.}
    \label{fig:BiasVar}
\end{figure*}

Figure~\ref{fig:pdf} illustrates that, across the three selected ROIs, the $\TV^1_\mu$ model produced the most concentrated posterior PDFs, followed by the Bounded TV and the Hierarchical $\TV^1_\mu$ models. In contrast, the Gamma and Uniform models result in progressively wider posterior distributions. This behaviour reflects a notable reduction in posterior uncertainty in the $T_1$ estimates achieved by the proposed method.
Table~\ref{tab:brain} shows that the standard deviations corresponding to the $\TV^1_\mu$ prior are smaller than all others across different tissues. 
The voxelwise MLE is particularly unstable in CSF, yielding unrealistically large $T_1$ estimates (\textit{e.g.}, 32.01s in Table~\ref{tab:brain}). This can be caused by the fact that, in CSF, the VFA signal is nearly insensitive to variations in $T_1$, resulting in a flat likelihood surface. Consequently, the inverse problem becomes weakly identifiable, allowing the MLE to drift toward excessively large values with minimal impact on the data fidelity term.

Figure~\ref{fig:BiasMap} shows that the average $T_1$ maps obtained using the $\TV^1_\mu$ method appear smoother than those obtained with other priors. This observation is consistent with the bias maps. Additionally, the variance maps indicate a lower variance when using the $\TV^1_\mu$ method.

Figure~\ref{fig:BiasVar} shows that the samples from the model using the $\TV^1_\mu$ prior have a smaller negative bias compared to other methods, especially when the $T_1$ values are high. Also, the variance from the proposed method is clearly much lower than for the other methods.

\subsection{Cardiac $T_1$ Mapping}

For the cardiac $T_1$ Mapping, the $\TV_\mu^p$ model with $p=1$ had the highest epld.
Figure~\ref{fig:pdfmolli} shows the PDFs of the estimated $T_1$ values within the selected ROIs. The corresponding summary statistics are provided in Table~\ref{tab:cardiac}.
The bias and variance for each Bayesian method are visualized as spatial maps in Figure~\ref{fig:BiasMapmolli}, alongside the mean $T_1$ maps, and as overall comparison metrics across all voxels in Figure~\ref{fig:BiasVarmolli}.

\begin{figure*}[]
    \centering
    \includegraphics[height=0.2\linewidth]{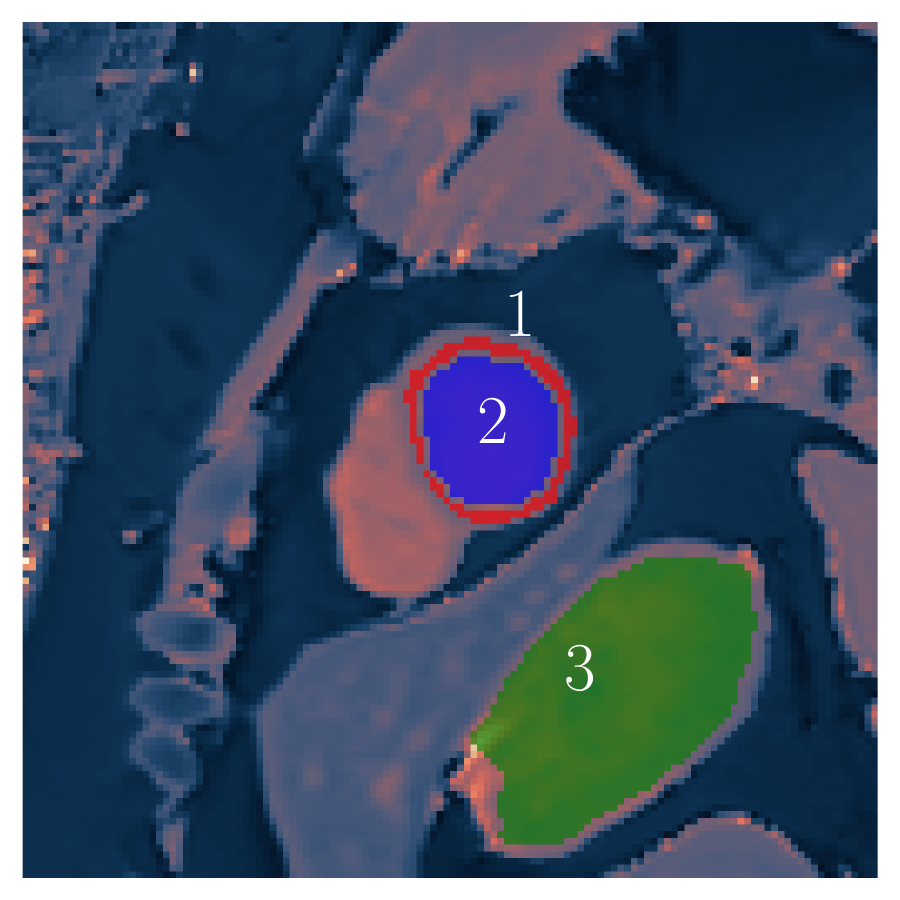}
    \includegraphics[height=0.2\linewidth]{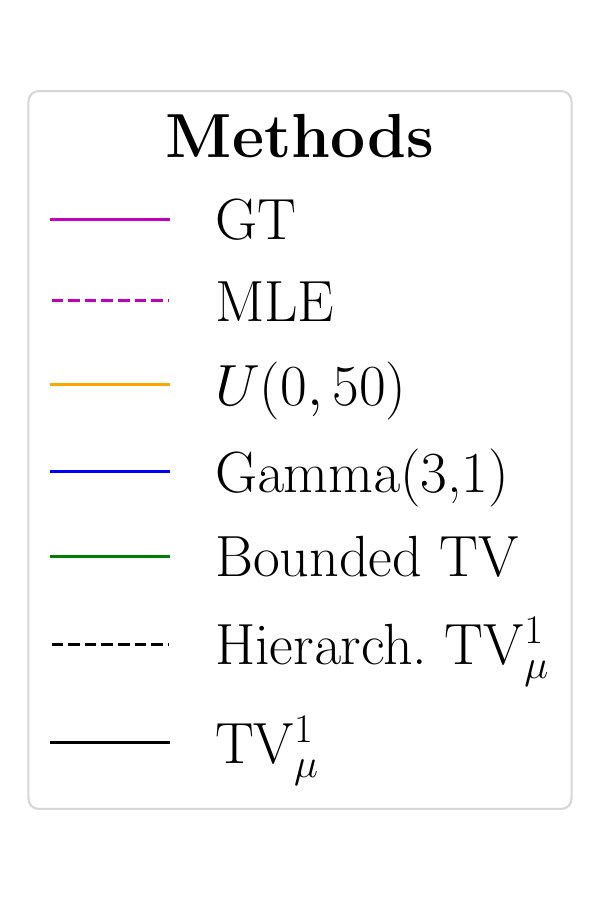}
    \includegraphics[height=0.2\linewidth]{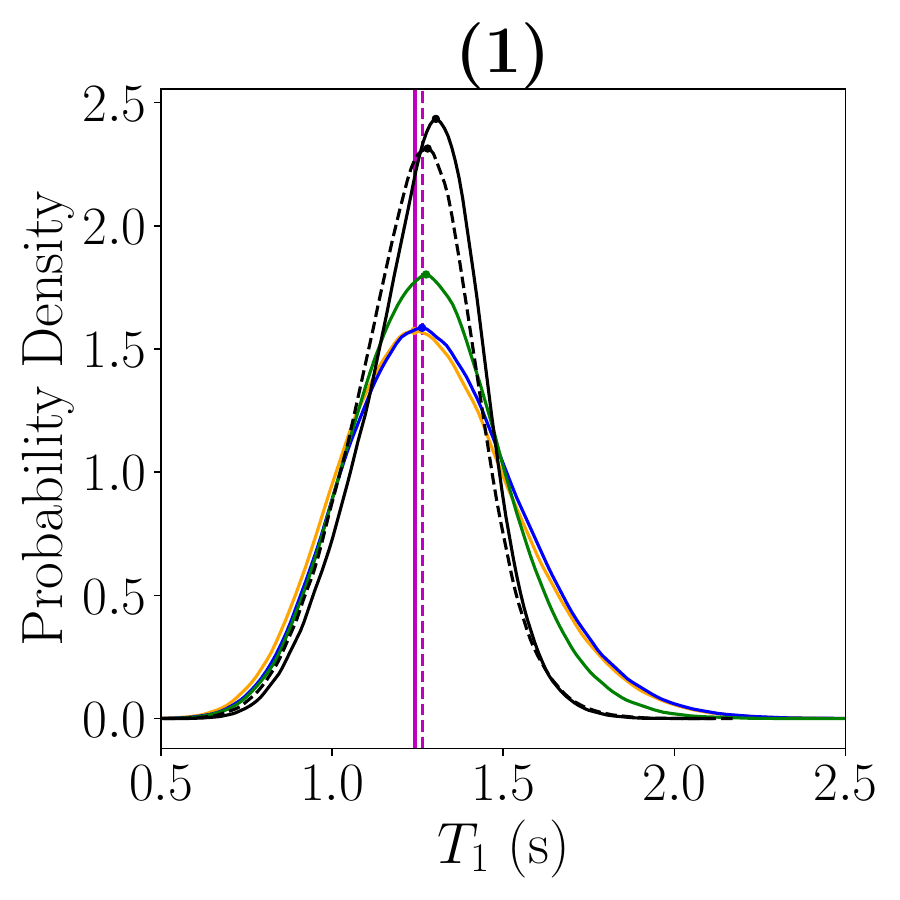}
    \includegraphics[height=0.2\linewidth]{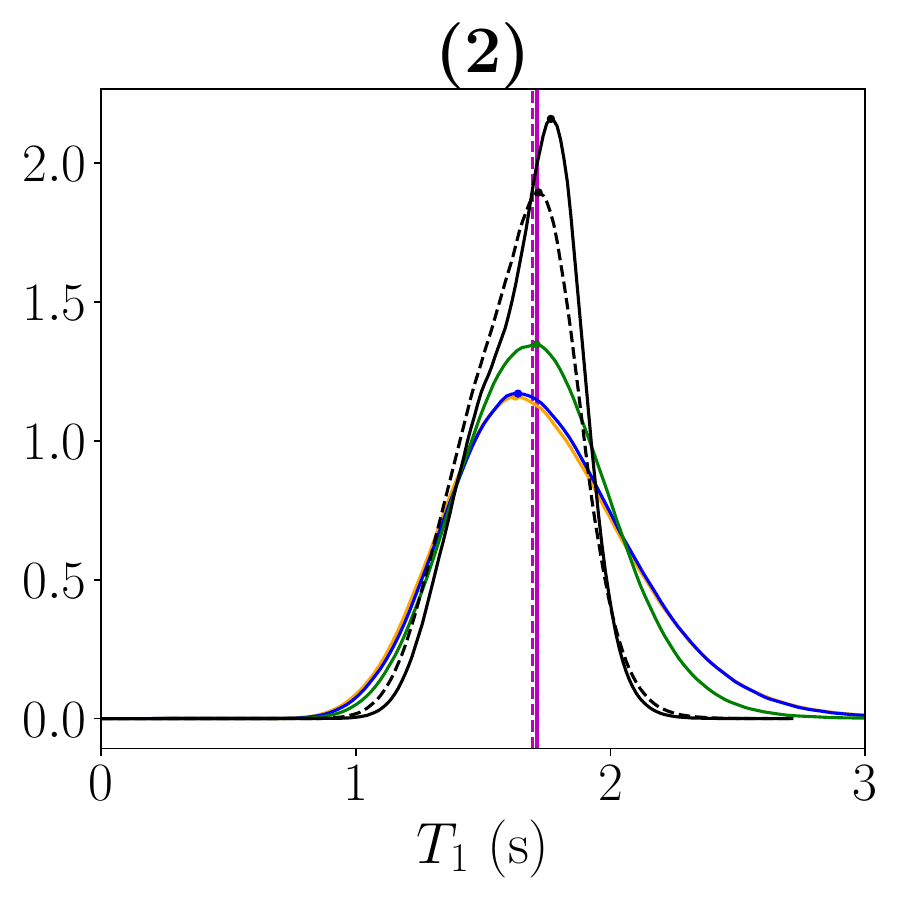}
    \includegraphics[height=0.2\linewidth]{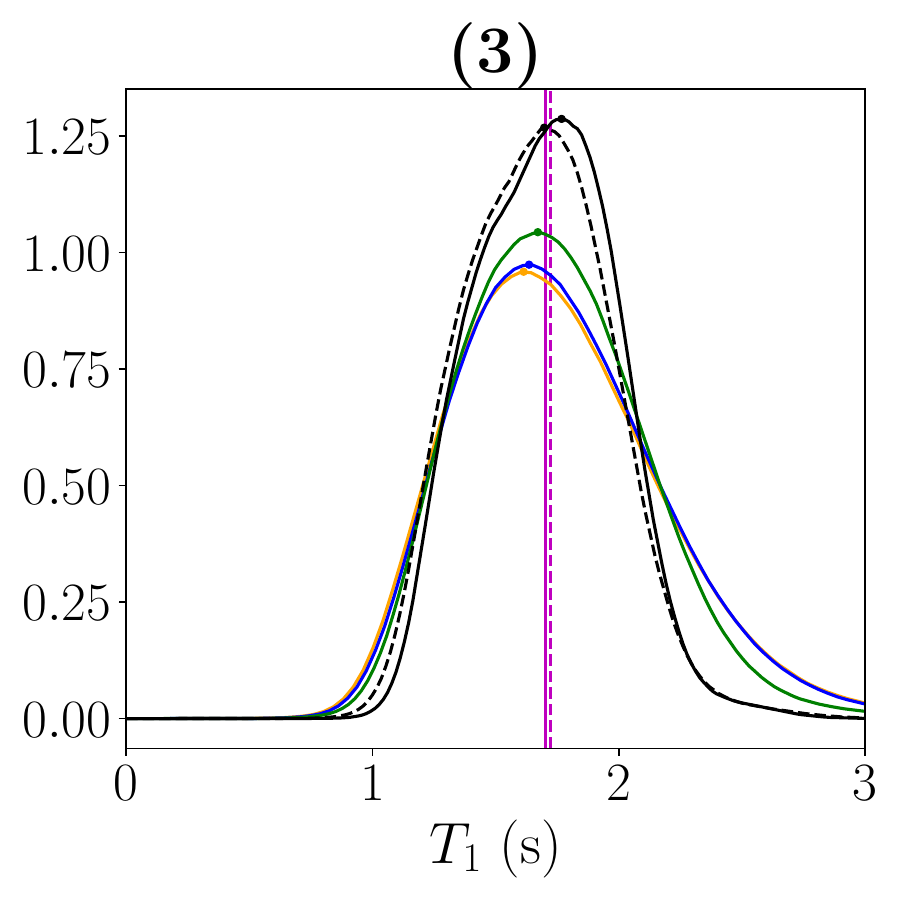}
    \caption{PDFs of $T_1$ values at selected ROIs (highlighted in red associated with myocardium, blue, and green on the anatomical image). The estimated PDFs from different Bayesian methods are shown, along with the GT and MLE $T_1$ values for comparison.}
    \label{fig:pdfmolli}
\end{figure*}

\begin{table*}[]
    \centering
    \resizebox{0.97\textwidth}{!}{%
        \begin{tabular}{lccccccc}
            \toprule
            \textbf{Tissue} & \textbf{GT} & \textbf{MLE} & $\mathbf{U(0,50)}$ & $\mathbf{Gamma(3,1)}$ & \textbf{Bounded} $\boldsymbol{\TV}$ & \textbf{Hierarch.}~$\boldsymbol{\TV}^1_\mu$ & ${\boldsymbol{\TV}^1_\mu}$ \\
            \midrule
            \textbf{(1)} & 1.24 & 1.26 & 1.24, 1.29 $\pm$ 0.51 & 1.26, 1.30 $\pm$ 0.51 & 1.27, 1.28 $\pm$ 0.45 & 1.28, 1.24 $\pm$ 0.36 & 1.30, 1.26 $\pm$ \textbf{0.35} \\
            \textbf{(2)} & 1.71 & 1.69 & 1.63, 1.73 $\pm$ 0.72 & 1.64, 1.74 $\pm$ 0.71 & 1.71, 1.71 $\pm$ 0.59 & 1.72, 1.64 $\pm$ 0.43 & 1.77, 1.67 $\pm$ \textbf{0.40} \\
            \textbf{(3)} & 1.70 & 1.72 & 1.61, 1.76 $\pm$ 0.88 & 1.64, 1.77 $\pm$ 0.86 & 1.67, 1.74 $\pm$ 0.76 & 1.70, 1.67 $\pm$ 0.59 & 1.77, 1.70 $\pm$ \textbf{0.57} \\
            \bottomrule
            \end{tabular}
    }
    \caption{Statistics of the estimated $T_1$ values in ROIs across the different methods. The table lists the GT $T_1$ values and the MLEs, alongside the results from different Bayesian methods, as indicated in the top column. For each Bayesian model, the table reports the posterior mode, mean, and two standard deviations as ``$\mathrm{mode}, \mathrm{mean} \pm 2\sigma$''. All $T_1$ values are reported in seconds (s).}
    
    \label{tab:cardiac}
    \end{table*}

\begin{figure*}[]
    \centering
    \includegraphics[width=0.9\linewidth]{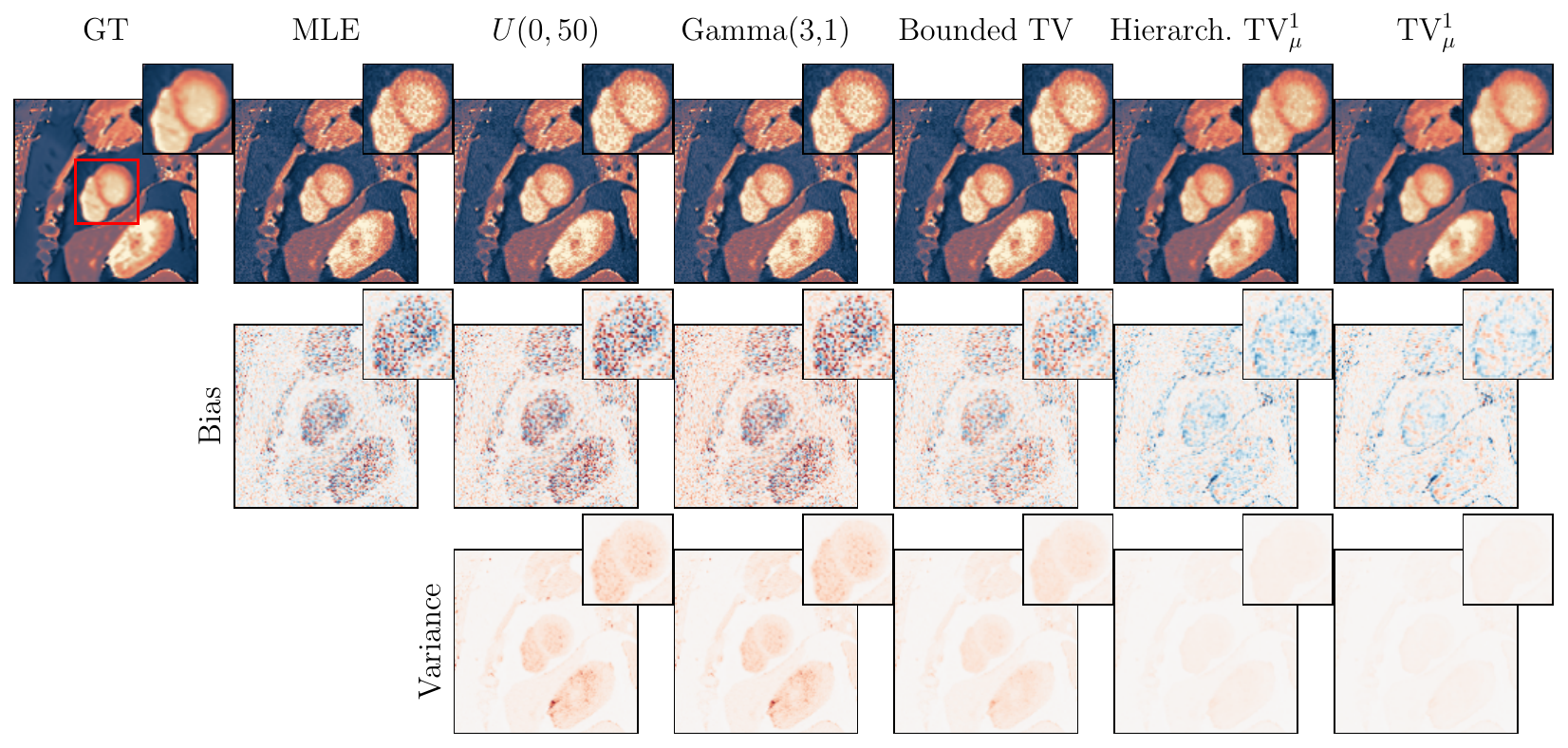}
    \includegraphics[height=0.4\linewidth]{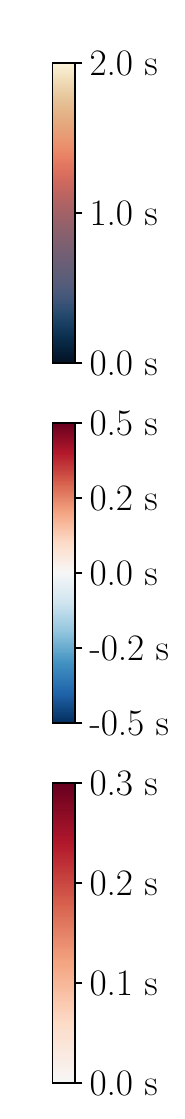}
    \caption{
    Comparison of mean $T_1$, bias, and variance maps from different methods. The first row displays the GT $T_1$ map, the MLE point estimate, and the mean $T_1$ maps obtained from each Bayesian model, as labeled at the top of the figure. The second row presents the corresponding bias maps, while the third row shows the variance maps. Each subplot includes a zoomed-in view of a region (marked by the red square) in the top-right corner for detailed comparison.
    }
    \label{fig:BiasMapmolli}
\end{figure*}

\begin{figure*}[]
    \centering
    \includegraphics[width=0.99\linewidth]{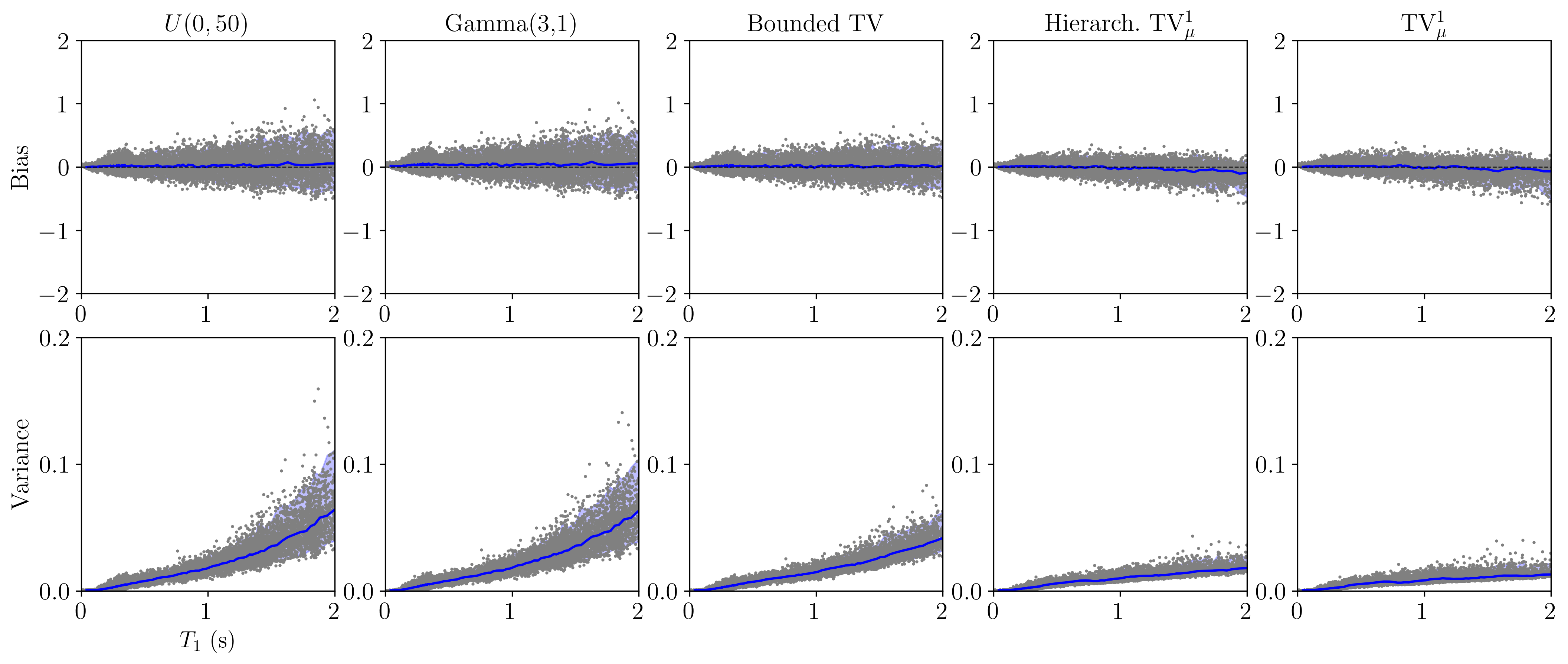}
    \caption{Bias and variance analysis for the Bayesian methods. The horizontal axis shows the GT $T_1$ values, while the vertical axis indicates the bias/variance observed at each voxel. The blue curves represent the average bias/variance for a certain GT $T_1$ value. The blue shaded area contains 95\% of the data points.}
    \label{fig:BiasVarmolli}
\end{figure*}

Figure~\ref{fig:pdfmolli} illustrates that, across the three selected ROIs, the $\TV^1_\mu$ method produces the most concentrated posterior PDFs, whereas the Hierarchical $\TV^1_\mu$, Bounded TV, Gamma, and Uniform models yield progressively wider distributions. This behaviour reflects a notable reduction in posterior uncertainty in the $T_1$ estimates achieved by the proposed $\TV^1_\mu$ method.

Table~\ref{tab:cardiac} shows that the standard deviations corresponding to the $\TV^1_\mu$ prior are smaller than all others across different tissues.

Figure~\ref{fig:BiasMapmolli} shows that the average $T_1$ maps obtained using the $\TV^1_\mu$ method appear smoother than those obtained with other priors. This observation is consistent with the bias maps. Additionally, the variance maps indicate a lower variance when using the $\TV^1_\mu$ method.

Figure~\ref{fig:BiasVarmolli} shows that the samples from the model using the $\TV^1_\mu$ prior have a smaller negative bias compared to other methods, especially when the $T_1$ values are high. Also, the variance from the proposed method is clearly much lower than for the other methods.

\subsection{Breast $T_1$ Mapping}

Similar to the synthetic experiment, the $\TV_\mu^p$ model with $p=1$ had the highest epld for the Breast $T_1$ Mapping. 
Figure~\ref{fig:pdfbreast} presents the PDFs of the estimated $T_1$ values within the ROIs. 
Figure~\ref{fig:pdfbreast} shows that the Hierarchical $\TV^1_\mu$ method produces the most concentrated posterior PDFs, whereas the $\TV^1_\mu$, Bounded TV, Gamma, and Uniform models yield progressively wider distributions. The multimodal posterior distributions observed for ROIs 2, 3, and 6 reflect heterogeneous tissue composition within the analyzed regions.
The corresponding summary statistics are provided in Table~\ref{tab:breast}.
Table~\ref{tab:breast} shows that the standard deviations corresponding to the Hierarchical $\TV^1_\mu$ prior are smaller than all others across different tissues.

Figure~\ref{fig:mapssbreast} shows the estimated $T_1$ map from MLE and the estimated mean $T_1$ maps from Uniform, Gamma, bounded $\TV$, and proposed Hierarchical $\TV_\mu^1$ and $\TV_\mu^1$ models. The estimated $T_1$ maps from the proposed models are smoother than those from classical Bayesian alternatives.

\begin{figure*}[]
    \centering
    \includegraphics[height=0.2\linewidth]{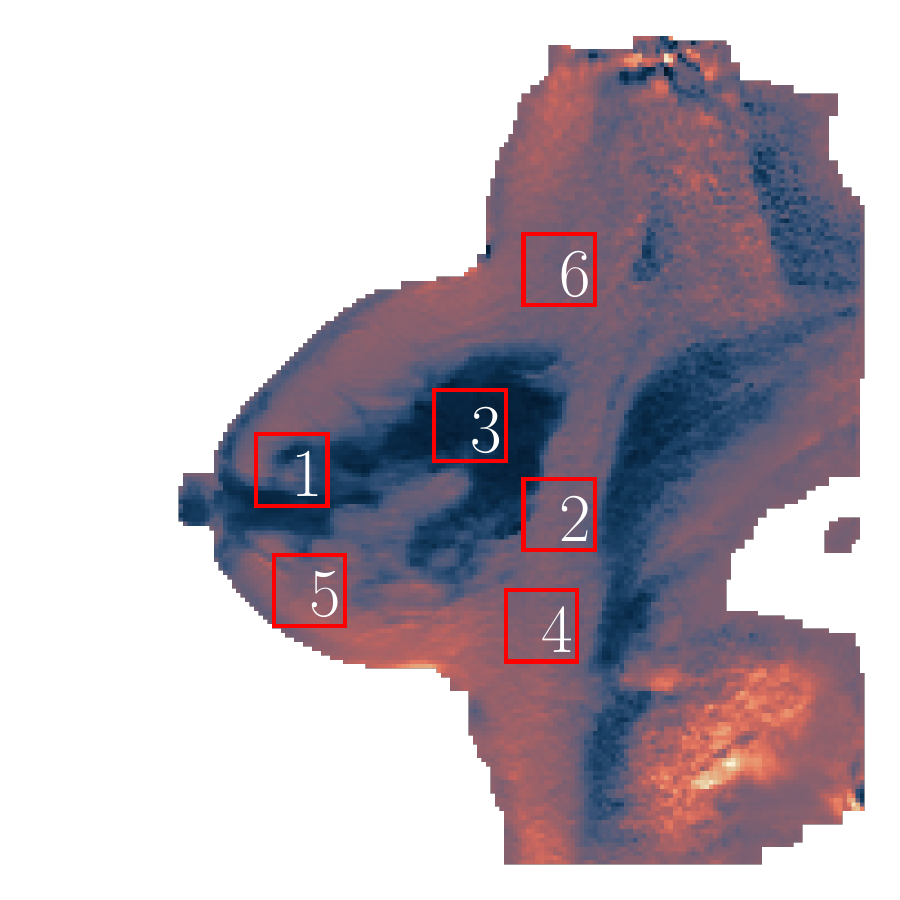}
    \includegraphics[height=0.2\linewidth]{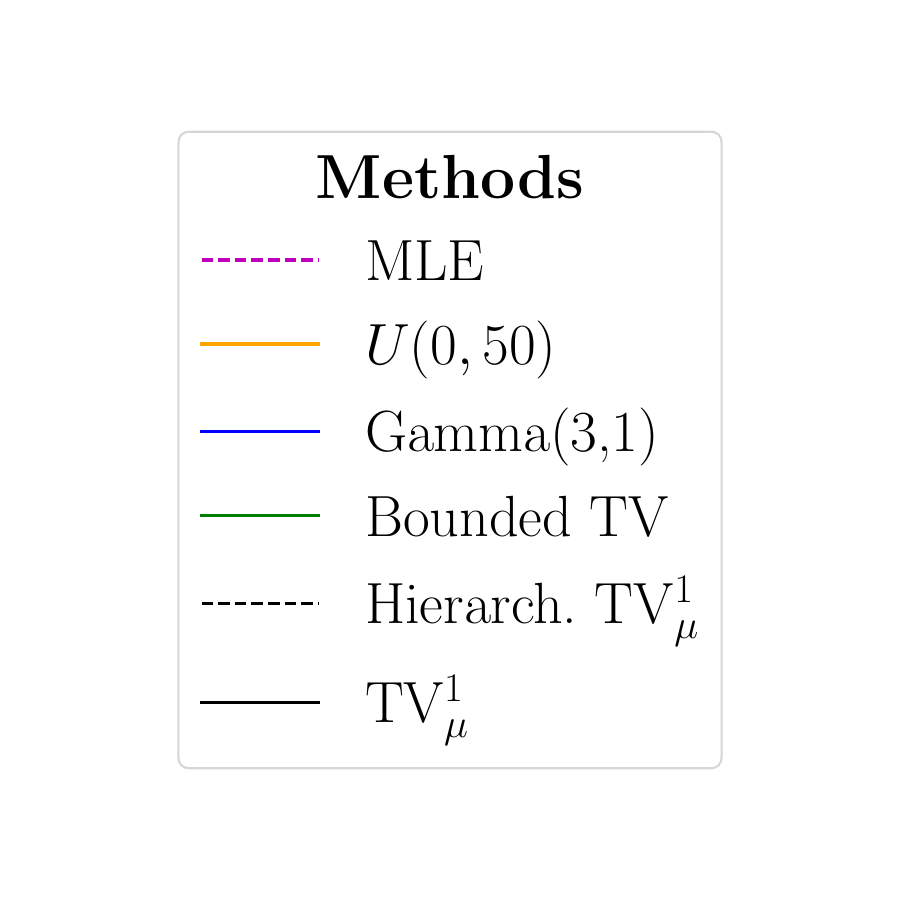}
    \includegraphics[height=0.2\linewidth]{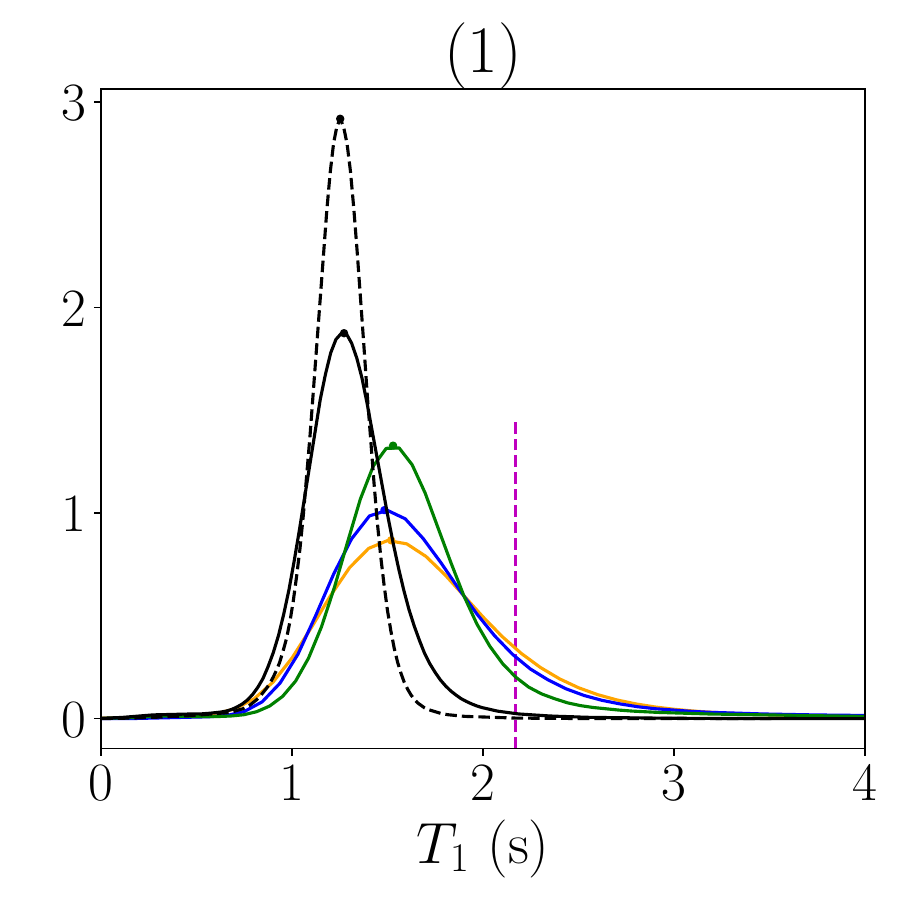}
    \includegraphics[height=0.2\linewidth]{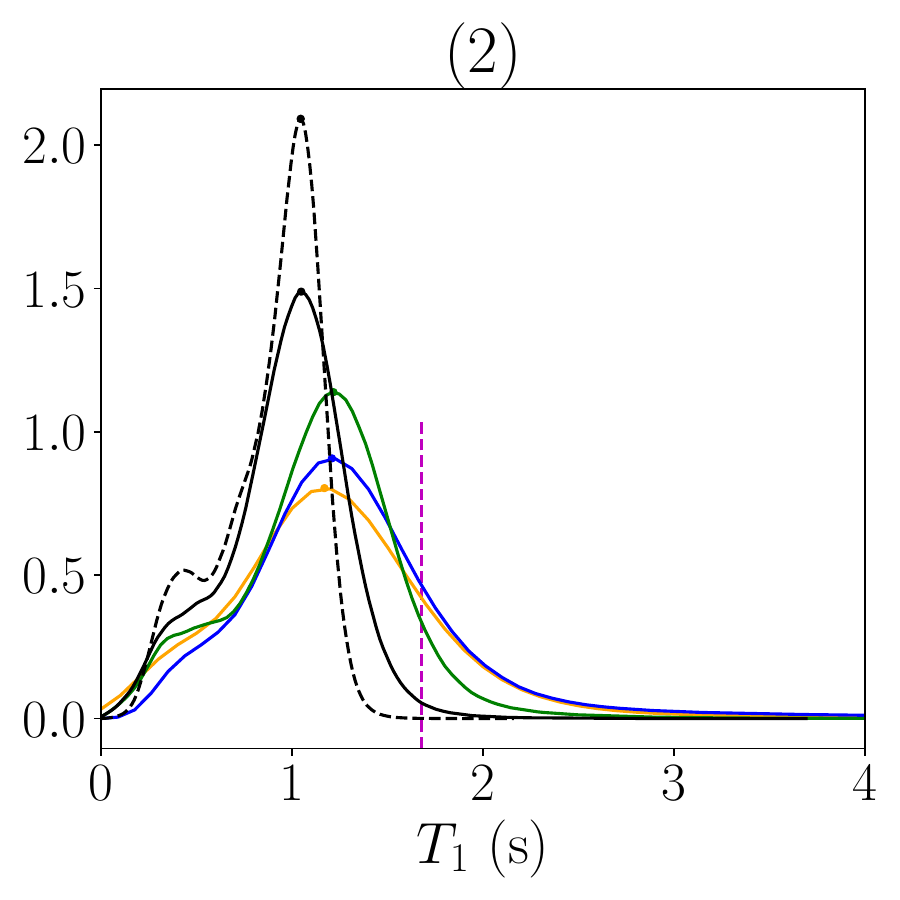}
    \includegraphics[height=0.2\linewidth]{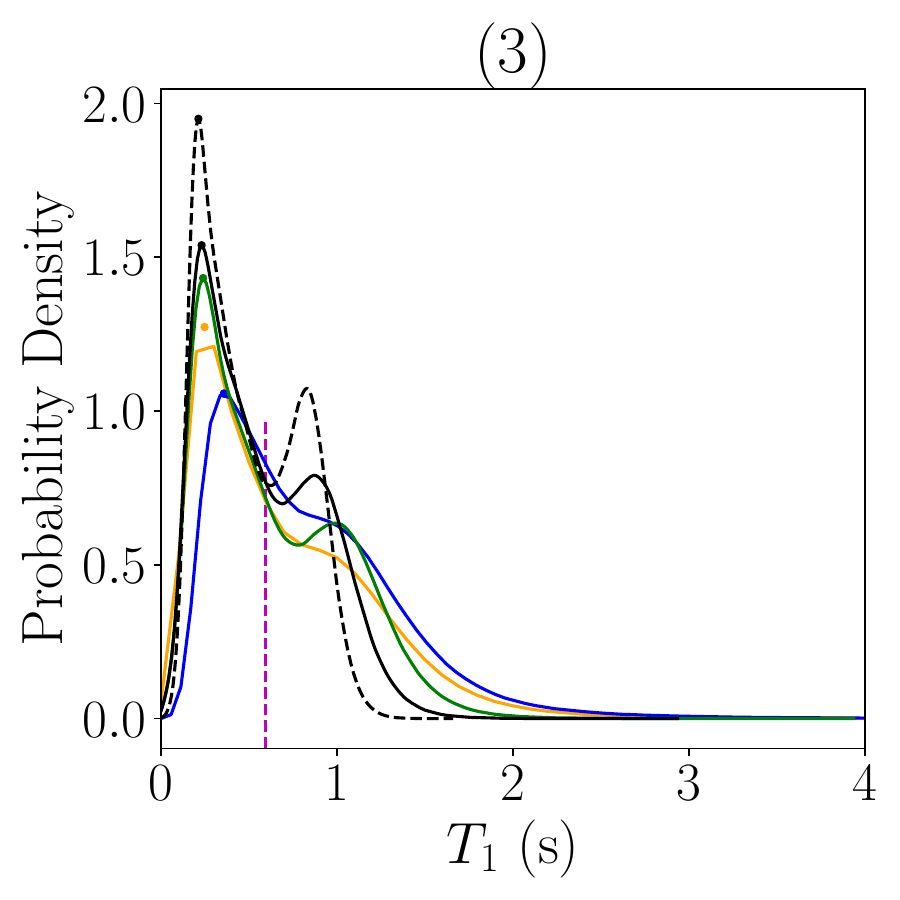}
    \includegraphics[height=0.2\linewidth]{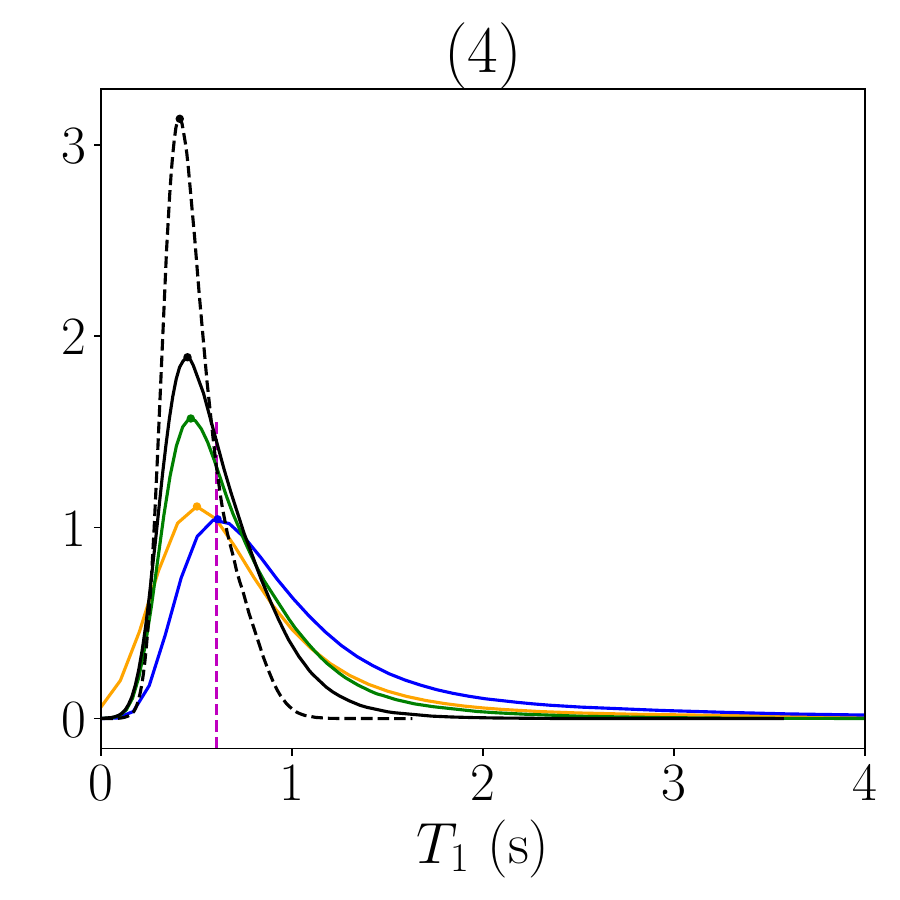}
    \includegraphics[height=0.2\linewidth]{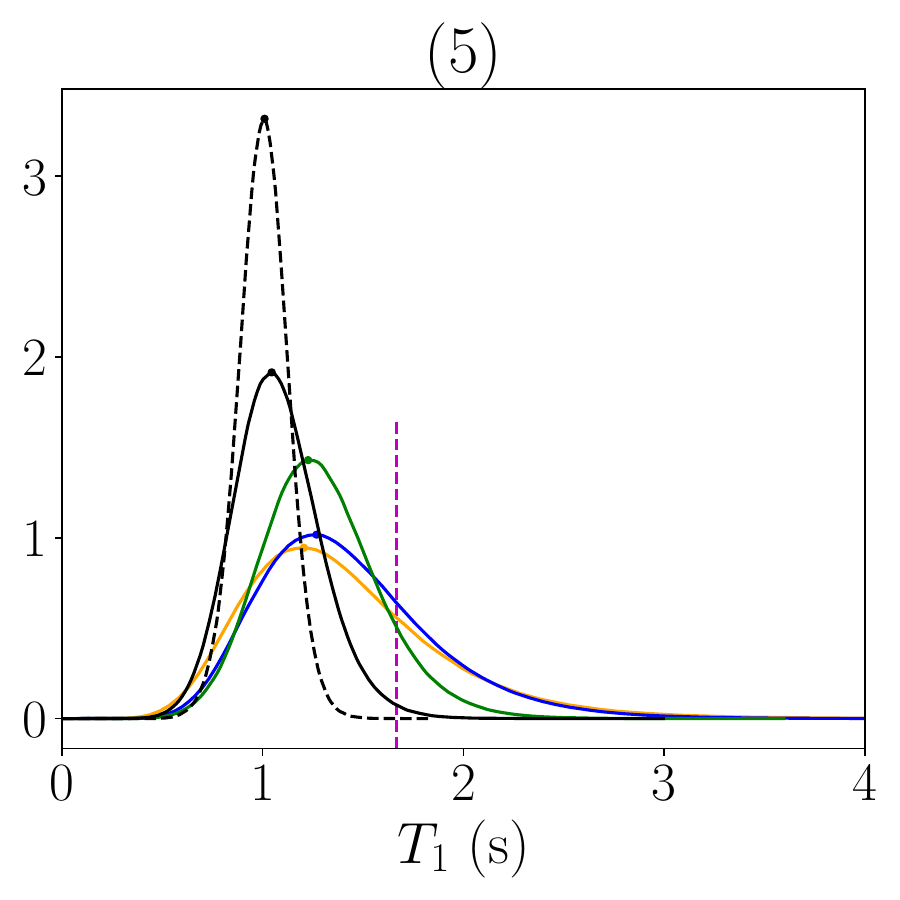}
    \includegraphics[height=0.2\linewidth]{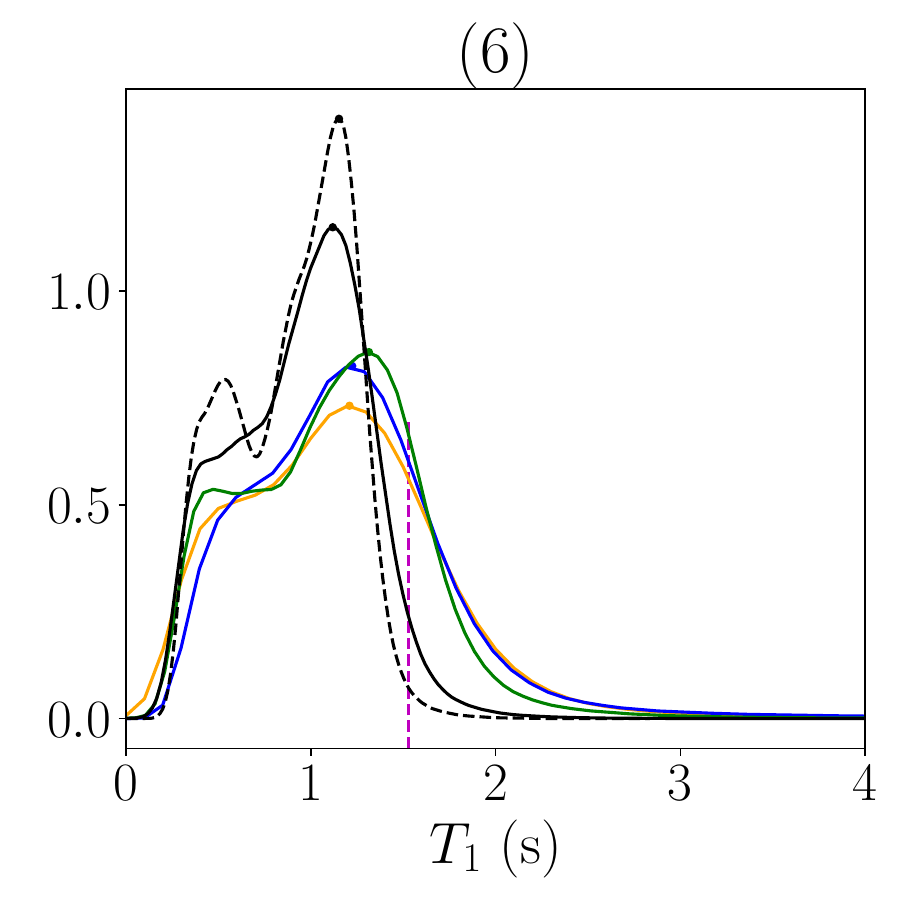}
    \caption{PDFs of $T_1$ values at selected ROIs (highlighted in red on the anatomical image, top-left). The estimated PDFs from different Bayesian methods are shown, along with the MLEs, for comparison.}
    \label{fig:pdfbreast}
\end{figure*}

\begin{table*}[]
    \centering
    \resizebox{0.97\textwidth}{!}{%
        \begin{tabular}{lcccccc}
            \toprule
            \textbf{Patch} & \textbf{MLE} & $\mathbf{U(0,50)}$ & $\mathbf{Gamma(3,1)}$ & \textbf{Bounded}~$\boldsymbol{\TV}$ & \textbf{Hierarch.}~$\boldsymbol{\TV}^1_\mu$ & $\boldsymbol{\TV}^1_\mu$ \\
            \midrule
            \textbf{Patch 1} 
            & 2.17 
            & 1.52, 2.18 $\pm$ 4.83 
            & 1.48, 1.77 $\pm$ 1.63 
            & 1.53, 1.67 $\pm$ 1.11
            & 1.25, 1.24 $\pm$ \textbf{0.34}
            & 1.27, 1.30 $\pm$ 0.53 \\
            
            \textbf{Patch 2} 
            & 1.68 
            & 1.17, 1.61 $\pm$ 4.40 
            & 1.21, 1.38 $\pm$ 1.60 
            & 1.22, 1.14 $\pm$ 0.87
            & 1.04, 0.88 $\pm$ \textbf{0.53}
            & 1.05, 0.96 $\pm$ 0.64 \\
            
            \textbf{Patch 3} 
            & 0.59 
            & 0.25, 0.71 $\pm$ 1.22
            & 0.36, 0.83 $\pm$ 1.05
            & 0.24, 0.64 $\pm$ 0.80
            & 0.21, 0.51 $\pm$ \textbf{0.55}
            & 0.23, 0.57 $\pm$ 0.66 \\
            
            \textbf{Patch 4} 
            & 0.60 
            & 0.50, 1.32 $\pm$ 4.87
            & 0.61, 1.14 $\pm$ 1.87
            & 0.47, 0.72 $\pm$ 0.82
            & 0.41, 0.49 $\pm$ \textbf{0.31}
            & 0.45, 0.60 $\pm$ 0.53 \\
            
            \textbf{Patch 5} 
            & 1.67 
            & 1.20, 1.41 $\pm$ 1.00
            & 1.27, 1.42 $\pm$ 0.88
            & 1.23, 1.29 $\pm$ 0.59
            & 1.01, 1.00 $\pm$ \textbf{0.25}
            & 1.04, 1.07 $\pm$ 0.43 \\
            
            \textbf{Patch 6} 
            & 1.53 
            & 1.21, 1.35 $\pm$ 3.24
            & 1.22, 1.24 $\pm$ 1.38
            & 1.31, 1.12 $\pm$ 1.00
            & 1.15, 0.90 $\pm$ \textbf{0.63}
            & 1.12, 0.95 $\pm$ 0.72 \\
            
            \bottomrule
        \end{tabular}
    }
    \caption{Quantitative comparison of ML and Bayesian estimates for each selected ROI. For each Bayesian model, the table reports the posterior mode, mean, and uncertainty as ``$\mathrm{mode}, \mathrm{mean} \pm 2\sigma$''. All T$_1$ values are reported in seconds (s).}
    \label{tab:breast}
\end{table*}

\begin{figure*}[tb]
    \centering
    \includegraphics[width=0.999\linewidth]{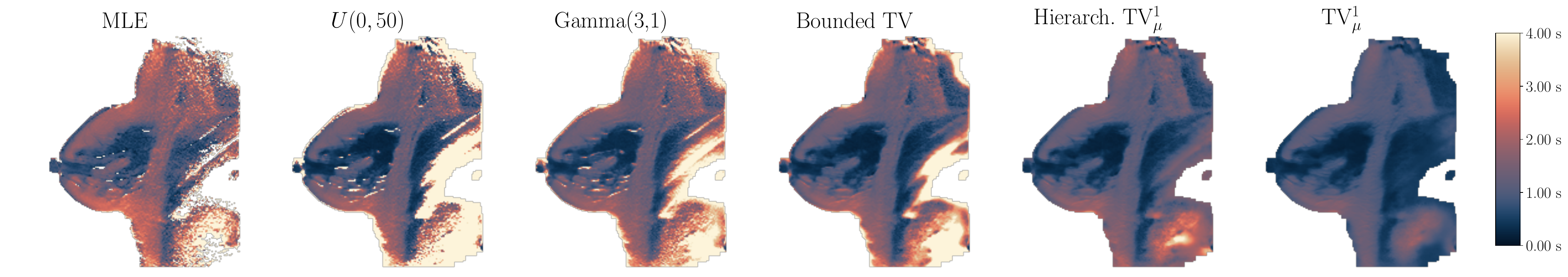}
    \caption{Comparison of mean $T_1$ maps from different methods as labeled at the top of the figure. }
    \label{fig:mapssbreast}
\end{figure*}

The $\TV^1_\mu$ and Hierarchical $\TV^1_\mu$ methods exhibit different behavior on the real dataset compared with the synthetic ones. This suggests that the choice between these methods depends on the level of prior knowledge available for a given application.
Moreover, the consistently better performance of $p=1$ compared to $p=2$ in terms of epld may be attributed to the lower sensitivity of the $\ell_1$ norm to large $T_1$ values. We also noticed that methods with the $\TV^1_\mu$ prior do not always yield the most accurate mean estimates compared to the Bounded TV method, reflecting a bias–variance trade-off.

\section{Conclusion}

This work introduces a new family of structured priors that integrate the TV function with $\ell_p$ norms, providing a proper and flexible prior distribution for Bayesian $T_1$ mapping. Unlike earlier TV-based formulations, the proposed construction preserves the desirable spatial regularization properties of TV while ensuring a normalizable prior that assigns meaningful probability across the full parameter space. Embedded within a Bayesian framework and combined with NUTS sampling, this formulation enables both stable parameter estimation and rigorous uncertainty quantification.

Across synthetic brain, cardiac, and real in-vivo breast datasets, the TV--$\ell_p$ prior consistently produced more concentrated posterior densities than the other methods, leading to reduced uncertainty, lower variance, and only a small negative bias. As a result, the parameter maps exhibit enhanced spatial coherence and more reliable credibility assessments, even under challenging noise conditions and limited data.

Overall, the results demonstrate that incorporating TV-based spatial structure together with $\ell_p$ norms within the Bayesian framework provides a robust foundation for accurate $T_1$ estimation and uncertainty quantification, and can be naturally extended to other qMRI modalities or to joint reconstruction–estimation strategies in future work. While our experiments focus on medical imaging, the method is not limited to this domain and can be applied to a wide range of problems that require structured signal recovery or spatially informed priors.


\acks{This work was supported by the Swedish Research Council (Vetenskapsrådet; grant number 2021-04810) and Lion's Cancer Research Foundation in Northern Sweden (grant numbers LP 22-2319, LP 24-2367, and AMP 26-1265).}

%
\ethics{The work followed appropriate ethical standards in conducting research and writing the manuscript, following all applicable laws and regulations regarding treatment of animals or human subjects.}

\coi{We declare we don't have conflicts of interest.}

\data{All data used in this study are either publicly available or were synthetically generated as described in the manuscript. 
The synthetic brain and cardiac $T_1$ mapping datasets can be fully reproduced using the simulation procedures provided in the Data section. 
The real breast MRI data were obtained from the publicly accessible \textit{QIN-BREAST-02} collection available through The Cancer Imaging Archive (TCIA). 
No additional restrictions apply to the availability or reproduction of the datasets used in this work. The code associated with this work is publicly available at: \url{https://github.com/Disi2022/Proper-TV-Structured-Priors}}

\bibliography{sample}

\end{document}